\title{Efficient and Robust Classification for Sparse Attacks}
\date{} 					
\author{{Mark Beliaev} \\
	University of California, Santa Barbara\\
	\texttt{mbeliaev@ucsb.edu} \\
	\And
	{Payam Delgosha} \\
	University of Illinois at Urbana-Champaign\\
	\texttt{delgosha@illinois.edu} \\
	\And
	{Hamed Hassani} \\
	University of Pennsylvania\\
	\texttt{hassani@seas.upenn.edu} \\
	\And
	{Ramtin Pedarsani} \\
	University of California, Santa Barbara\\
	\texttt{ramtin@ucsb.edu} \\
}
\date{}
\newcommand{\evwrt}[2]{\mathbb{E}_{#1} \left [ #2 \right ] }
\newcommand{\pr}[1]{\mathbb{P} \left ( #1 \right ) }
\newcommand{\norm}[1]{\left \Vert #1 \right \Vert}
\newcommand{\snorm}[1]{\Vert #1 \Vert}
\newcommand{\one}[1]{\mathbbm{1} \left [ #1 \right ]}
\newcommand{\reals}{\mathbb{R}}
\newtheorem{lem}{Lemma}
\newtheorem{thm}{Theorem}
\newtheorem{cor}{Corollary}
\newcommand{\tSigma}{\widetilde{\Sigma}}
\newcommand{\vtheta}{\vec{\theta}}
\newcommand{\vx}{\vec{x}}
\newcommand{\vZ}{\vec{Z}}
\newcommand{\vW}{\vec{W}}
\newcommand{\vz}{\vec{z}}
\newcommand{\va}{\vec{a}}
\newcommand{\vb}{\vec{b}}
\newcommand{\vs}{\vec{s}}
\newcommand{\vu}{\vec{u}}
\newcommand{\vw}{\vec{w}}
\newcommand{\tvw}{\widetilde{\vec{w}}}
\newcommand{\tw}{\widetilde{w}}
\newcommand{\vmu}{\vec{\mu}}
\newcommand{\vxp}{\vec{x}'}
\newcommand{\vnu}{\vec{\nu}}
\let\vec\bm
\newcommand{\mN}{\mathcal{N}}
\newcommand{\mX}{\mathcal{X}}
\newcommand{\mB}{\mathcal{B}}
\newcommand{\mC}{\mathcal{C}}
\newcommand{\mCkw}{\mathcal{C}^{(k)}_{\vw}}
\newcommand{\mD}{\mathcal{D}}
\DeclareMathOperator*{\argmin}{arg\,min}
\newcommand{\loss}{\mathcal{L}}
\newcommand{\optloss}{\mathcal{L}^\ast}
\newcommand{\sgn}{\text{sgn}}
\newcommand{\adv}{\mathsf{Adv}}
\newcommand{\ktrunc}{k^\text{Trunc}}
\newcommand{\kopt}{k^*}
\newcommand{\alphatrunc}{\alpha^\text{Trunc}}
\newcommand{\alphaopt}{\alpha^*}
\newcommand{\lambdace}{\lambda_{c(\varepsilon)}}
\algnewcommand\algorithmicinput{\textbf{Input:}}     %
\algnewcommand\INPUT{\item[\algorithmicinput]}       %
\algnewcommand\algorithmicoutput{\textbf{Output:}}   %
\algnewcommand\OUTPUT{\item[\algorithmicoutput]}     %
\algrenewcommand\algorithmicrequire{\textbf{Input:}} %
\algrenewcommand\algorithmicensure{\textbf{Output:}} %
\begin{document}
	\maketitle	
\begin{abstract}
	In the past two decades we have seen the popularity of neural networks increase in conjunction with their classification accuracy. Parallel to this, we have also witnessed how fragile the very same prediction models are: tiny perturbations to the inputs can cause misclassification errors throughout entire datasets. In this paper, we consider perturbations bounded by the $\ell_0$--norm, which have been shown as effective attacks in the domains of image-recognition, natural language processing, and malware-detection. To this end, we propose a novel defense method that consists of ``truncation" and ``adversarial training". We then theoretically study the Gaussian mixture setting and prove the asymptotic optimality of our proposed classifier. Motivated by the insights we obtain, we extend these components to neural network classifiers. We conduct numerical experiments in the domain of computer vision using the MNIST and CIFAR datasets, demonstrating significant improvement for the robust classification error of neural networks.
\end{abstract}
\section{Introduction} \label{sec:intro}
Today we see machine learning at the heart of many safety-critical applications, including image recognition, autonomous driving, and virtual assistance. This comes with little surprise, as we have seen deep neural networks gain tremendous popularity due to their success, showing near human performance in the image-recognition domain \cite{imagenet}, as well as successful application in natural language processing \cite{andor2016globally}, and playing games \cite{mnih2013playing,G0_article}. Instead, what is surprising is how fragile these neural networks are when subjected to adversarial attacks. \par
Adversarial attacks are methods that try to fool prediction models by adding small perturbations to their inputs. They were initially shown to be effective in causing classification errors throughout different machine learning models \cite{Biggio_2013,szegedy2014intriguing,goodfellow2015explaining}. Following this, a lot of effort has been put into generating increasingly more complex attack models that can utilize a small amount of semantic-preserving modifications, while still being able to fool a classifier \cite{37madry2019deep,13carlini2017evaluating,17croce2019sparse}. Typically, this is done by constraining the perturbations with an $\ell_p$--norm, where the most common settings use either $\ell_\infty$ \cite{52szegedy2014intriguing,32kurakin2017adversarial,13carlini2017evaluating,37madry2019deep,athalye2018obfuscated,marzi2018sparsity, add_4}, $\ell_2$ \cite{40moosavidezfooli2016deepfool,13carlini2017evaluating,48rony2019decoupling, add_1, add_2}, or $\ell_1$ \cite{14chen2018ead,39modas2019sparsefool}. As of now, the state-of-the-art empirical defense against adversarial attacks is iteratively retraining with adversarial examples \cite{37madry2019deep}. While adversarial retraining by itself can help improve robustness, we have seen a fundamental trade-off between robustness and clean accuracy, as well as a lack of generalization across different attacks \cite{tsipras2018robustness,su2018robustness,raghunathan2019adversarial,zhang2019theoretically,javanmard2020precise}. \par
In this paper we focus on a different setting, where adversarial perturbations are constrained using the $\ell_0$--norm. This setting has gained considerable~attention \cite{13carlini2017evaluating,42papernot2015limitations,39modas2019sparsefool,49schott2018adversarially,croce2020sparse,levine2019robustness} due to applications in object detection \cite{li2019adversarial,grosse2016adversarial} and NLP \cite{jin2019bert}. In these applications, robust guarantees against $\ell_0$--attacks are specifically important since there is an inherent limit on the number of input features that can be modified. In the previously described settings, the adversary was able to modify all of the elements of the input, while still satisfying the given constraint. Conversely, when using the $\ell_0$--norm the adversary is given a budget $k$, and directly constrained to perturbing at most $k$ coordinates within the input. In other words, the adversary is allowed to change the input within the $\ell_0$--ball of radius $k$, where $k$ is typically much smaller than the input dimension, and hence the name \emph{sparse attacks}. In addition, unlike $\ell_p$--balls ($p \geq 1$), the $\ell_0$--ball has a more complex geometry: it is non-convex, highly non-smooth, and unbounded. In combination with these properties, the $\ell_0$--ball's inherent discrete structure provides fundamental challenges that are absent in other adversarial settings studied in the literature, making most techniques from prior work non-applicable. Crucially, piece-wise linear classifier, e.g. neural networks with ReLU activations, were shown to fail in this setting \cite{shamir2019simple}, where recent work has demonstrated the success of $\ell_0$--attacks on images \cite{41narodytska2016simple,13carlini2017evaluating,42papernot2015limitations,49schott2018adversarially,17croce2019sparse}. Thus, our current architecture designs and learning procedures have to be rethought based on the unique geometry of the $\ell_0$--norm. We set out to accomplish this goal in this paper. 
\par
Two notable works have proposed defenses against the related but less powerful $(\ell_0+\ell_\infty)$--adversary: the \textit{Analysis by Synthesis} (ABS) model \cite{49schott2018adversarially} and randomized ablation \cite{levine2019robustness}. Here the adversary is also constrained by the number of coordinates it can perturb, but these perturbations can no longer be arbitrarily large due to the bound posed by the $\ell_\infty$--norm on the value that each coordinate can take. Although the proposed defenses show improved robustness guarantees when classifying the MNIST and CIFAR  datasets, we see these guarantees vanish as the $\ell_\infty$--bound is relaxed, while our method is able to generalize to both settings (more details are provided in Table~\ref{tab:median_l0} located in Section \ref{sec:experiments}). On top of this, we note that the aforementioned defenses rely on computationally expensive~solutions. \par
Building on our prior work \cite{delgosha2021robust}, we develop an algorithm that directly tackles the $\ell_0$ setting, and prove that in the Gaussian mixture setting we can achieve asymptotic optimality. 
Utilizing the state-of-the-art sparse attack of \texttt{sparse-rs} \cite{croce2020sparse} as well as the commonly used \texttt{Pointwise Attack} \cite{49schott2018adversarially}, we show that while adversarial training alone fails in robustifying against $\ell_0$--attacks, our method has strong performance both in terms of robustness and computational efficiency when tested on the MNIST  \cite{MNIST} and CIFAR \cite{CIFAR} datasets.\par
\section{Problem Setup} \label{sec:prob_setup}
We consider the general $M$--class classification problem, where given an input $\vx\in\reals^d$ and its  label $y\in\{1,\ldots,M\}$, we aim to construct a model that can accurately predict the  label given the input. We can think of the input and labels as coming from some distribution $(\vx,y)\in\mD$, with our classifier belonging to the family of functions $\mC:\reals^d\mapsto\{1,\ldots,M\}$. As a metric for the discrepancy between the  label and the classifier's prediction for a given input $\vx$, we use the $0-1$ loss $\ell(\mC;\vx,y)=\mathbbm{1}[\mC(\vx)\neq y]$. \par
Given this setup we can introduce an $\ell_0$--adversary, which perturbs the input $\vx$ within the $\ell_0$--ball of radius $k$: $\mB_0(\vx,k):=\{\vxp\in\reals^d:\norm{\vx-\vxp}_0\leq k\}$,
where we define $\norm{\vx}_0:=\sum_{i=1}^d\mathbbm{1}[x_i\neq0]$ for $\vx=(x_1,\ldots,x_d)$, and refer to $k$ as the $\textit{budget}$ of the adversary. This states that the adversary is allowed to arbitrarily modify at most $k$ coordinates of $\vx$ to obtain $\vxp$, feeding the new vector $\vxp$ to the classifier. Within this scope, the \textit{robust classification error} of a classifier $\mC$ is defined by:
\begin{equation} \label{eq:rob_class_error}
\loss_{\mD}(\mC, k)= \evwrt{(\vx, y) \sim \mD}{\max_{\vxp \in \mB_0(\vx, k)} \ell(\mC; \vxp, y)},
\end{equation}
where we aim to design classifiers with the minimum \textit{robust classification error}. To this end, we can define the \textit{optimal robust classification error} as the result of minimizing \eqref{eq:rob_class_error} over all possible classifiers:
\begin{equation} \label{eq:opt_loss_general}
\optloss_{\mD}(k) := \inf_{\mC} \loss_{\mD}(\mC, k).
\end{equation}
Due to the complex geometry of the $\ell_0$--ball, this poses a challenging problem. In fact, we have already seen how all conventional classifiers fail in this setting \cite{shamir2019simple}. In order to address this problem, our current architecture designs and learning procedures have thus to be \emph{rethought} based on the geometry of the perturbation set. To this end, we note that directly solving the optimization problem in \eqref{eq:rob_class_error} and finding the optimal robust error is not tractable. Instead, inspired by robust statistics \cite{huber2004robust}, we introduce truncation as the main building block of our classifier. We then aim to find the best robust classifier in the set of truncated classifiers. Such optimization can be analyzed in the Gaussian mixture scenario, and can be tackled by adversarial training in the general deep learning scenario. As shown in Section \ref{sec:main-results}, the theoretical study of the Gaussian mixture model allows us to establish the optimality of our method.\par
\section{The Proposed Algorithm}\label{sec:proposed_algo}
In this section we will go over the proposed algorithm, introducing how \textit{truncation} is defined, followed by an explanation of how it can be extended to \textit{fully connected} layers found within neural networks. We then describe the adversarial training component of our framework. As we will show in our theoretical and experimental results, coupling truncation with adversarial training is crucial to robustifying classifiers against $\ell_0$--attacks. We defer the explanation of applying truncation to convolutional networks to Section \ref{sec:experiments}, where we discuss our experiments using the CIFAR dataset. \par
\subsection{Truncation}
We define truncation as an operation that acts on two vectors by computing their truncated inner product. Given $\vw,\vx\in\reals^d$ and an integer $0\leq k\leq d/2$, we define the $k$--\textit{truncated inner product} of $\vw$ and $\vx$ as the summation of the element-wise product of $\vw$ and $\vx$ after removing the top and bottom $k$ elements, and denote it by $\langle \vw, \vx \rangle_k$. If we define $\vu:=\vw\odot\vx\in\reals^d$ as the element-wise product of $\vw$ and $\vx$, then letting $\vs=(s_1,\ldots s_n)=\textrm{sort}(\vu)$ be the result obtained after sorting $\vu$ in descending order, we can define
\begin{equation}\label{eq: truncation}
\langle\vw,\vx\rangle_k:=\sum_{i=k+1}^{d-k}s_i.
\end{equation}\par
Note that when $k=0$, the truncation operation in $\eqref{eq: truncation}$ reduces to the normal inner product denoted by $\langle\vw,\vx\rangle$. We can see that truncation is a natural method by which one can remove ``outliers'' found in the data after an adversary has modified some coordinates. Since an $\ell_0$--adversary with a budget of $k$ can modify at most $k$ of the input's coordinates by an arbitrary amount, we can expect the $k$--truncated inner product to be robust against these $\ell_0$ perturbations. In fact, we formalize this result in Section $\ref{sec:main-results}$ and show that truncation can be directly used to construct the optimally robust classifier in the setting of Gaussian mixture models attacked by an $\ell_0$--adversary. Until then, we will focus the discussion on how we use truncation to construct robust neural networks. \par
To test the usability of the proposed truncation operator, we must consider how it can be applied within typical neural network architectures to improve their robustness. Within the scope of our notation in Section \ref{sec:prob_setup}, we restrict the family of classifiers $\mC:\reals^d\mapsto\{1,\ldots,M\}$ to functions that can be represented by feed-forward neural networks composed of fully connected (FC) layers and non-linearities. \par
We denote a \textit{fully connected feed-forward neural network with $L$ layers} as a function $F(\vx;\vtheta)=y$ parameterized by $\vtheta$, which takes an input $\vx\in\reals^d$, and returns the predicted label $y\in\{1,\ldots,M\}$. This network can be viewed as a composite of $L$ functions, referred to as layers, with non-linearities applied between the layers:
\begin{equation}\label{eq:FC_net}
F(\vx;\vtheta) = \sigma_L(\vW_L\sigma_{L-1}(\vW_{L-1}\ldots\sigma_1(\vW_1\vx)\ldots)),
\end{equation}
where the parameters are $\vtheta = (\vW_1,\ldots,\vW_L)$ with $\vW_l\in\reals^{d_l\times d_{l-1}}$ and $d_0=d$, and the non-linearities are $\sigma_l:\reals^{d_l}\mapsto\reals^{d_l}$. In our work we use the well known ReLU \cite{relu} activation function for all of our non-linearities other than the one at the output layer $\sigma_L$, which is implemented as a softmax so that our function outputs a probability vector. Also note that we have left out denoting the bias terms added within the FC layers, as this can be taken care of by appending a constant coordinate to the input.\par
\subsection{Robust Fully Connected Networks}
We can naturally extend truncation to FC layers by defining this operation to act on a weight matrix $\vW$ as such: 
\begin{equation}\label{eq:matrix_trunc}
\langle \vW, \vx \rangle_k = \vu \textrm{, where } u_i= \langle \vW[i], \vx \rangle_k,
\end{equation}
using $\vW[i]$ to denote the $i$'th row of the weight matrix $\vW$. Note that \eqref{eq:matrix_trunc} returns a vector $\vu$, whose $i$'th entry $u_i$ is the result of applying our truncation operation shown in \eqref{eq: truncation} on the row $\vW[i]$ and vector $\vx$, where the biases are added after truncation is performed. To form our $k$--truncated fully connected network $F^{(k)}$, we replace the first FC layer $\vW_1\vx$ in \eqref{eq:FC_net} with its $k$--truncated version defined in \eqref{eq:matrix_trunc}.
\begin{equation}\label{eq:rob_FC_net}
F^{(k)}(\vx;\vtheta) = \sigma_L(\vW_L\sigma_{L-1}(\vW_{L-1}\ldots\sigma_1(\langle \vW_1, \vx \rangle_k)\ldots)).
\end{equation}
Note that with this formulation, $F^{(0)}=F$, since $\langle \vW, \vx \rangle_k = \vW\vx$ when $k=0$. Applying truncation on the first layer ensures that the effect of the adversary is compensated at the early stages of the network and does not propagate through the layers.\par
\subsection{Adversarial Training}
Although truncation on its own is expected to increase a classifier's robustness, we suggest going farther and coupling our framework with adversarial training as originally proposed by \cite{37madry2019deep}. In the Gaussian mixture setting considered in Section \ref{sec:main-results}, we prove that the asymptotically optimal classifier requires truncation as well as an optimization step for finding the best weights that resemble adversarial training. We hypothesize that extending these theoretical results to neural networks will help improve their robustness, and to this end we formalize the exact adversarial training algorithm we utilize in our work when testing our claim. \par
Our goal is to improve the robust guarantees of a FC network $F$ against an $\ell_0$--attack with budget $k$. We accomplish this by turning $F$ into its $k$--truncated counterpart $F^{(k)}$, and performing adversarial training on $F^{(k)}$ by iteratively appending adversarial examples to the training data. Of course adversarial training can be applied to any classifier $f$. Hence we express our training algorithm generally, by considering any $\ell_0$--adversary that attacks some classifier $f$ by using an $\ell_0$--budget of $k$ and a time budget of $t$. We define this attack as a function $g(\mX;f,k,t):\mX\mapsto\mX'$ where $\mX=\{\vx_1,\ldots,\vx_{|\mX|}\}$ is some set of unperturbed data samples, and $\mX'=\{\vxp_1,\ldots,\vxp_{|\mX'|}\}$ is a derived set of adversarial examples which are all misclassified by $f$. Note that we use $|\mX|$ to denote the cardinality of the set $\mX$. Using this attack, we train on the appended dataset $\mX\cup\mX'$, and every certain number of epochs we empty the adversarial set, and recalculate $\mX'=g(\mX;f,k,t)$. Hence the adversarial examples are chosen according to a procedure which is adaptive w.r.t. to our model $f$, and we use this procedure as a means of solving the minimax problem in \eqref{eq:opt_loss_general}. Note that we leave out the details of the \textit{training} framework used as this is problem specific and should be chosen accordingly.\par
\section{Theoretical Framework}
\label{sec:main-results}
In this section, within the setup of Section~\ref{sec:prob_setup}, we consider a Gaussian mixture setting and show that our
algorithm achieves  near optimal robust classification error, i.e.,
we show that the deviation from optimality is asymptotically vanishing. 
The key insight that we obtain from our theoretical analysis is that truncation and adversarial training are the two major components that enable provable robustness against $\ell_0$--attacks. 
More precisely, we consider the binary classification scenario where the distribution $\mD$ is as follows. We have $y \in
\{\pm 1\}$ with $\pr{y = +1} = \pr{y = -1} = 1/2$, and conditionally on $y$, we
have $\vx = y \vmu + \vz$ where $\vmu \in \reals^d$ and $\vz \sim \mN(0,\Sigma)$
is a Gaussian vector with zero mean and diagonal covariance matrix $\Sigma$. To
simplify the discussion, we assume that $\Sigma$ has strictly positive diagonal
entries $\sigma_1^2, \dots, \sigma_d^2$.\footnote{Although we make the diagonal assumption in this section, we discuss a more general setting in the Appendix}
It is easy to verify that in the
absence of the adversary, the optimal Bayes classifier is the linear classifier
$\sgn(\langle \vw, \vx \rangle)$ with $\vw = \Sigma^{-1} \mu$. The corresponding 
optimal standard error of this classifier is  $\bar{\Phi}(\snorm{\Sigma^{-1/2}
	\vmu}_2)$, where $\bar{\Phi}(.)$ denotes the complementary CDF of the
standard normal distribution. Therefore, in order to fix the baseline, 
without loss of generality we assume that $\snorm{\Sigma^{-1/2} \vmu}_2 = 1$ so that the optimal standard error is $\bar{\Phi}(1)$. 
Motivated by the fact that the optimal Bayes classifier in this setting is linear,
and referring our discussion in Section~\ref{sec:proposed_algo}, we consider
neural networks with a single layer.
More precisely, we consider the family of  $k$--truncated linear classifiers
$\mCkw  : \vxp \mapsto \sgn(\langle \vw, \vxp \rangle_k)$. 
Adopting our notation in~\eqref{eq:rob_class_error}, 
we denote the robust
classification error of a classifier $\mCkw$ in this family by $  \loss_{\vmu,
	\Sigma}(\mCkw, k)$.
Moreover, as in~\eqref{eq:opt_loss_general}, we denote the optimal robust
classification error by $  \optloss_{\vmu, \Sigma}(k)$.
To simplify the notation, when the problem parameters $\vmu$ and $\Sigma$ are
clear from the context, we may remove them from the above notations and simply
write $\loss(\mCkw, k)$ and $\optloss(k)$.

\subsection{Asymptotic Optimality of our Algorithm}
\label{sec:our-optimality}
To show that $k$--truncated linear classifiers are asymptotically optimal, we must first recall the following result from our prior work \cite{delgosha2021robust} which established a lower bound on the optimal robust classification by developing an attack strategy for the adversary and showing that no classifier can achieve better performance. 
\begin{thm}[Theorem 2 in \cite{delgosha2021robust}]
	\label{thm:lower-bound-diag}
	Assume that $\Sigma$ is diagonal and let $\vnu = \Sigma^{-1/2} \vmu$. 
	Then for any  $A \subseteq \{1,\dots,d\}$, we have
	\begin{equation*}
	\optloss\left(\snorm{\vnu_A}_1 \log d \right) \geq \bar{\Phi}(\snorm{\vnu_{A^c}}_2) - \frac{1}{\log d},
	\end{equation*}
	where $\vnu_A$ and $\vnu_{A^c}$ denote the coordinates of $\vnu$ in the sets
	$A$ and $A^c$, respectively.
\end{thm}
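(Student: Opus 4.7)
The plan is to establish the lower bound by exhibiting an explicit (possibly randomized) $\ell_0$--bounded attack $\adv$ whose output distribution forces the Bayes error of $(\adv(\vx,y),y)$ to be at least $\bar{\Phi}(\snorm{\vnu_{A^c}}_2) - 1/\log d$. Since $\optloss(k) \geq \inf_\mC \mathbb{E}[\ell(\mC;\adv(\vx,y),y)]$ for any attack $\adv$ that respects the $\ell_0$--budget almost surely, it suffices to (a) define $\adv$ fitting budget $\snorm{\vnu_A}_1\log d$ and (b) analyze the Bayes error of the distribution of its output.

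I would set the threshold $t := 1/\log d$ and split $A = A_{\text{large}} \sqcup A_{\text{small}}$ with $A_{\text{large}} := \{i \in A : |\nu_i| \geq t\}$. Markov's inequality gives $|A_{\text{large}}| \leq \snorm{\vnu_A}_1/t = \snorm{\vnu_A}_1 \log d$, which fits the budget deterministically. Define $\adv$ by flipping the class-conditional mean on each $i \in A_{\text{large}}$ independently with probability $1/2$: set $x'_i = x_i - 2y\mu_i$ with probability $1/2$ (else $x'_i = x_i$), and leave the coordinates outside $A_{\text{large}}$ unchanged. The key observation is that for each $i \in A_{\text{large}}$, the marginal class-conditional distribution of $x'_i$ is the symmetric mixture $\tfrac12\mN(\mu_i,\sigma_i^2)+\tfrac12\mN(-\mu_i,\sigma_i^2)$, which is $y$--independent. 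Combined with the diagonal $\Sigma$ and the mutually independent flip coins, the joint distribution of $\vx'$ restricted to $A_{\text{large}}$ is independent of $y$, so the optimal classifier against this attack reduces to the Bayes classifier on the untouched index set $A^c \cup A_{\text{small}}$. On that set $\vx'$ is still a standard Gaussian mixture with mean $\pm\vmu$, and the Bayes error is $\bar{\Phi}\!\bigl(\sqrt{\snorm{\vnu_{A^c}}_2^2 + \snorm{\vnu_{A_{\text{small}}}}_2^2}\bigr)$.

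It remains to turn this into the stated bound. By construction $\snorm{\vnu_{A_{\text{small}}}}_2^2 \leq t\,\snorm{\vnu_{A_{\text{small}}}}_1 \leq \snorm{\vnu_A}_1/\log d$, and a mean-value argument on $\bar{\Phi}$ (whose derivative is bounded in magnitude by $1/\sqrt{2\pi}$) converts this into an error deficit of at most $\tfrac{1}{\sqrt{2\pi}}\snorm{\vnu_{A_{\text{small}}}}_2 \leq \sqrt{\snorm{\vnu_A}_1/(2\pi\log d)}$. The main obstacle is exactly this last step: the crude Lipschitz estimate is tight to $O(1/\log d)$ only when $\snorm{\vnu_A}_1$ is itself of order $1/\log d$, so to reach the advertised $1/\log d$ uniformly one needs either a case split (the claim is vacuous whenever $\bar{\Phi}(\snorm{\vnu_{A^c}}_2) \leq 1/\log d$, since $\optloss \geq 0$) or a refinement of the attack that also perturbs coordinates in $A_{\text{small}}$ with carefully chosen fractional probabilities $p_i$, optimized via a Lagrangian so that the residual Fisher information (equivalently the $\chi^2$ between the two class-conditionals) is bounded by $1/\log^2 d$ rather than bounding the Euclidean signal by the Lipschitz constant of $\bar{\Phi}$.
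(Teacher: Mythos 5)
Your overall strategy---exhibit a budget-respecting randomized attack and lower-bound $\optloss$ by the Bayes error of the attacked distribution---is the same as the paper's, and your ``flip the mean'' device on $A_{\text{large}}$ is a legitimate way to make those coordinates carry no information about $y$. But the proof as written does not reach the stated bound, and you have correctly located the break: the coordinates in $A_{\text{small}}$ are never touched, and the resulting deficit $\tfrac{1}{\sqrt{2\pi}}\snorm{\vnu_{A_{\text{small}}}}_2 \leq \sqrt{\snorm{\vnu_A}_1/(2\pi\log d)}$ is not $O(1/\log d)$ in general (note $\snorm{\vnu_A}_1$ can be as large as $\sqrt d$ since only $\snorm{\vnu}_2=1$ is assumed). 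Neither of your proposed repairs closes this. The case split fails because the problematic regime is not where the claim is vacuous: take $\nu_i = 1/\sqrt d$ for all $i$ and $A$ equal to half of $[d]$. Then every $|\nu_i| < 1/\log d$, so $A_{\text{large}}=\emptyset$, your attack does nothing at all, and your argument yields only the unattacked Bayes error $\bar{\Phi}(1)$---whereas the theorem asserts $\optloss(\snorm{\vnu_A}_1\log d) \geq \bar{\Phi}(1/\sqrt2) - 1/\log d$, a constant improvement of roughly $0.08$ that your construction cannot produce. Your second suggested repair (fractional attack probabilities on the small coordinates) is the right idea, but it is the entire missing content of the proof, not a refinement of what you wrote.

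The paper executes exactly that fractional scheme, and it is worth seeing why it gives $1/\log d$ exactly. For \emph{every} $i\in A$, large or small, the adversary erases $x_i$ (replaces it by $\text{Unif}[-1,1]$) with an acceptance probability $1-p_i(x_i,y)$ chosen so that the conditional density of the attacked coordinate becomes \emph{identical} under $y=+1$ and $y=-1$; a direct computation shows the per-coordinate probability of modification is $\Erf(|\nu_i|/\sqrt2) \leq \sqrt{2/\pi}\,|\nu_i|$. Hence the expected number of modified coordinates is at most $\sqrt{2/\pi}\,\snorm{\vnu_A}_1$, and Markov's inequality shows the budget $\snorm{\vnu_A}_1\log d$ is respected except on an event of probability at most $1/\log d$. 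That failure probability---not a Lipschitz bound on $\bar\Phi$---is the source of the additive $1/\log d$ in the statement: on the complement of the failure event all of $A$ is exactly uninformative and the Bayes error is exactly $\bar{\Phi}(\snorm{\vnu_{A^c}}_2)$. To complete your proof you must attack $A_{\text{small}}$ as well with a label-equalizing randomization of this kind and account for the budget in expectation via Markov; deterministic budget accounting on $A_{\text{large}}$ alone cannot work.
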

Recall from Section~\ref{sec:proposed_algo} that we use adversarial training in
order to obtain the model weights. This is a proxy for optimizing $\vw$ in the class of $k$--linear classifiers $\mCkw$. More precisely, let $\vw^*(k) \in \argmin_{\vw} \loss(\mCkw, k)$.
In the following, we show that the performance of $\mC^{(k)}_{\vw^*(k)}$ in the presence of an adversary with $\ell_0$ budget $k$ is comparable to the optimal robust classification error, with an asymptotically vanishing deviation.
In order to do this, 
given an error threshold $\bar{\Phi}(1) \leq \varepsilon \leq 1/2$, we define $\ktrunc(\varepsilon) := \max \{ k: \loss(\mC^{(k)}_{\vw^*(k)}, k) \leq \varepsilon\}$,
which is the maximum adversarial budget that the class of truncated linear
classifiers can tolerate to achieve a robust error of at most $\varepsilon$, with the truncation parameter chosen to be equal to adversary's budget. Here, $\varepsilon$ is chosen to range between the standard error $\bar{\Phi}(1)$ and the error corresponding to a random guess.
Moreover, let $\kopt(\varepsilon) := \max \{ k: \optloss(k) \leq \varepsilon\}$
be the maximum adversarial budget that an optimal   classifier can tolerate constrained
on having a robust error of at most $\varepsilon$.  Clearly $\kopt(\varepsilon)
\geq \ktrunc(\varepsilon)$.

As we will formally show below, $\ktrunc$ and $\kopt$ are close to each other
up to multiplicative factors that are sublinear in $d$. As a results, to have a first
order analysis and to focus on the behavior of the adversary's budget as a power
of the dimension $d$, we define $\alphatrunc(\varepsilon) := \log_d \ktrunc(\varepsilon)$ and $\alphaopt(\varepsilon) := \log_d \kopt(\varepsilon)$.
The following theorem shows that modulo some vanishing terms in $d$,
$\alphatrunc$  is close to $\alphaopt$. In other words, the class of linear
truncation classifiers are asymptotically optimal for the above mixture Gaussian setting.
Proof of
Theorem~\ref{thm:alpha-trunc-alpha-opt} is provided in Appendix~\ref{sec:app-thm-alpha-proof}.
\begin{thm}
	\label{thm:alpha-trunc-alpha-opt}
	Given $\bar{\Phi}(1) + 1/\log d + \sqrt{2/\log d} <
	\varepsilon < \frac{1}{2}$, there are constants $c_i = c_i(\varepsilon, d)$,
	$i \in \{1, 2\}$,  which do not depend on the parameters of the problem
	(i.e.\ $\vmu$ and $\Sigma$) such that $\lim_{d \rightarrow \infty}
	c_i(\varepsilon, d) = 0$ for $i \in \{1,2\}$ and
	\begin{equation*}
	\alphaopt(\varepsilon) \geq \alphatrunc(\varepsilon) \geq \alphaopt(\varepsilon - c_1)  - c_2.
	\end{equation*}
\end{thm}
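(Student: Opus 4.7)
The first inequality $\alphaopt(\varepsilon) \geq \alphatrunc(\varepsilon)$ is immediate: for every $k$, $\loss(\mC^{(k)}_{\vw^*(k)}, k) \geq \optloss(k)$ since the $k$--truncated linear classifiers form a sub-family of all classifiers, whence $\kopt(\varepsilon) \geq \ktrunc(\varepsilon)$. All of the work is in the reverse inequality $\alphatrunc(\varepsilon) \geq \alphaopt(\varepsilon - c_1) - c_2$.

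My plan for the reverse direction is to exhibit, for any $k$ that is at most a polylog-in-$d$ factor below $k^\sharp := \kopt(\varepsilon - c_1)$, an explicit weight vector $\vw$ for which $\loss(\mC^{(k)}_{\vw}, k) \leq \varepsilon$; since $\vw^\ast(k)$ is the minimizer, this forces $\ktrunc(\varepsilon) \geq k^\sharp/\mathrm{polylog}(d)$, i.e.\ $\alphatrunc(\varepsilon) \geq \alphaopt(\varepsilon - c_1) - c_2$ with $c_2 = O(\log\log d/\log d)$. I would first reduce to the isotropic case via the change of variables $\vx \mapsto \Sigma^{-1/2}\vx$, which rescales $\vw$ but leaves both the $k$--truncated linear family and the $\ell_0$ budget invariant, so that WLOG $\Sigma = I$ and $\vmu = \vnu$ with $\snorm{\vnu}_2 = 1$. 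By the contrapositive of Theorem~\ref{thm:lower-bound-diag} at error level $\varepsilon - c_1$ together with the monotonicity of $\optloss$ in $k$, every $A \subseteq \{1,\dots,d\}$ with $\snorm{\vnu_A}_1 \log d \leq k^\sharp$ satisfies $\snorm{\vnu_{A^c}}_2 \geq \lambda^\sharp := \bar{\Phi}^{-1}(\varepsilon - c_1 + 1/\log d)$. I would then pick $A^*$ to maximize $\snorm{\vnu_A}_1$ among such sets; the hypothesis $\varepsilon > \bar{\Phi}(1) + 1/\log d + \sqrt{2/\log d}$ guarantees $\lambda^\sharp < 1 = \snorm{\vnu}_2$, so $A^*$ is a proper subset and $\snorm{\vnu_{A^*}}_1 \log d$ is of the same order as $k^\sharp$.

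With $\vw := \vnu \cdot \mathbbm{1}_{A^{*c}}$ and $k := k^\sharp/\log^2 d$, I would analyze $\langle \vw, \vxp\rangle_k$ under worst--case $\vxp \in \mB_0(\vx, k)$. Since $w_i = 0$ on $A^*$, an optimal adversary places all $k$ of its perturbations in $A^{*c}$; decomposing $\vw \odot \vxp = y\,\vnu \odot \vnu \cdot \mathbbm{1}_{A^{*c}} + \vnu \odot \tilde{\vz} \cdot \mathbbm{1}_{A^{*c}} + \vnu \odot \vec{\delta}$, with $\vec{\delta}$ supported on at most $k$ coordinates of $A^{*c}$, and removing the top-- and bottom--$k$ entries after sorting, the target estimate is
\begin{equation*}
\langle \vw, \vxp\rangle_k \;=\; y\snorm{\vnu_{A^{*c}\setminus T}}_2^2 \;+\; \langle \vnu_{A^{*c}\setminus T}, \tilde{\vz}_{A^{*c}\setminus T}\rangle \;+\; E,
\end{equation*}
for some (random) $T \subseteq A^{*c}$ of size $|T| \leq k$, where $E$ absorbs the truncation bias and is $o(1)$ with high probability by standard Gaussian order--statistic tail bounds. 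Combined with a bound $\snorm{\vnu_T}_2^2 = o(1)$, the surviving signal satisfies $\snorm{\vnu_{A^{*c}\setminus T}}_2 \geq \lambda^\sharp - o(1)$, and a Gaussian tail estimate then yields $\loss(\mC^{(k)}_{\vw}, k) \leq \bar{\Phi}(\lambda^\sharp - o(1)) \leq \varepsilon$.

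The main obstacle, in my view, is the claim $\snorm{\vnu_T}_2^2 = o(1)$: the adversary can spend its $\ell_0$ budget on small--$|\nu_i|$ coordinates of $A^{*c}$ precisely so as to ``push out'' moderately larger genuine coordinates via the sort, forcing them into $T$. Formalizing this needs an exchange argument quantifying how much $\vnu$--mass can be evicted per unit of $\ell_0$ budget, exploiting the $\snorm{\cdot}_1$--maximality of $A^*$ among low-budget sets together with the $\log d$ slack in Theorem~\ref{thm:lower-bound-diag}. The hypothesis on $\varepsilon$ is calibrated to leave room for $c_1 = 1/\log d + \sqrt{2/\log d}$ while keeping $\lambda^\sharp$ bounded away from $1$, and $c_2 = O(\log\log d/\log d)$ absorbs the polylog reduction from $k^\sharp$ to $k$; both vanish as $d \to \infty$.
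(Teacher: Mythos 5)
Your overall architecture matches the paper's: the first inequality is indeed immediate from class containment, and the reverse direction is obtained by sandwiching --- an explicit weight vector certifies a lower bound on $\ktrunc$, Theorem~\ref{thm:lower-bound-diag} gives an upper bound on $\kopt$, and the two differ by a $\mathrm{polylog}(d)$ factor that becomes $c_2 = O(\log\log d/\log d)$ after taking $\log_d$; the rescaling $\tvw = \Sigma^{1/2}\vw$, $\vnu = \Sigma^{-1/2}\vmu$ is also how the paper proceeds (Corollary~\ref{cor:linear-trucnated-general-bound-cor-to-diagonal}). But the step you yourself flag as the main obstacle --- controlling which genuine coordinates the adversary can evict through the sort, i.e.\ proving $\snorm{\vnu_T}_2^2 = o(1)$ --- is exactly the step that is missing, and the route you sketch for it is both unresolved and harder than necessary. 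The paper never tracks the evicted set: Lemma~\ref{lem:trun-ip-bound} gives the deterministic bound $|\langle\vw,\vxp\rangle_k - \langle\vw,\vx\rangle| \leq 8k\snorm{\vw\odot\vx}_\infty$ for every $\vxp\in\mB_0(\vx,k)$, so both insertion of adversarial entries and eviction of genuine ones are absorbed into a single term proportional to the sup norm of $\vw\odot\vx$, which is then bounded by $\snorm{\tvw}_\infty(1+\sqrt{2\log d})$ except on an event of probability at most $1/\sqrt{2\log d}$. No exchange argument over $T$ is needed.

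There is a second, related gap: your choice of the zeroed-out set does not deliver the structural inequality that makes the truncation error small. Whatever bound one uses, one ultimately needs $k\snorm{\tvw}_\infty\sqrt{\log d} = o(1)$ with $k$ of order $\snorm{\vnu_{A^*}}_1/\log d$, i.e.\ one needs something like $\snorm{\vnu_{A^*}}_1\cdot\snorm{\vnu_{A^{*c}}}_\infty \leq 1$. The paper gets this for free by taking the zeroed-out set to be the prefix $[1:\lambdace]$ of the coordinates sorted by decreasing $|\nu_i|$ (with $\lambdace$ the first index at which the cumulative $\ell_2$ mass reaches $c(\varepsilon)$): then $\snorm{\tvw}_\infty = |\nu_{\lambdace}|$ and $\snorm{\vnu_{[1:\lambdace]}}_1\,|\nu_{\lambdace}| \leq \sum_{i\le\lambdace}\nu_i^2 \leq \snorm{\vnu}_2^2 = 1$. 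Your $A^*$, defined as a maximizer of $\snorm{\vnu_A}_1$ subject to $\snorm{\vnu_A}_1\log d \leq k^\sharp$, is not pinned down (many sets attain nearly the same $\ell_1$ mass) and need not leave a complement with small $\ell_\infty$ norm --- it could consist of many tiny coordinates while a large $|\nu_i|$ survives in $A^{*c}$, in which case the adversary's damage per perturbed coordinate is not controlled and your claim that the truncation bias $E$ is $o(1)$ fails. Replacing $A^*$ by the sorted prefix $[1:\lambdace]$ and invoking Lemma~\ref{lem:trun-ip-bound} repairs both issues; the remainder of your outline then goes through essentially as in Lemmas~\ref{lem:loss-k-a-norm-1-bound} and~\ref{lem:kopt-bound}.
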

Theorem~\ref{thm:alpha-trunc-alpha-opt} essentially says that up to asymptotically vanishing terms, the truncated classifier can tolerate as much adversarial budget as an optimal robust classifier. In order to prove this result, we use Theorem~\ref{thm:lower-bound-diag} which enables us to make sure that no other classifier can achieve better asymptotic performance, hence our algorithm is asymptotically optimal.
\section{Experiments} \label{sec:experiments}
To present our experimental results, we first discuss (i) how we chose and modified the $\ell_0$--attacks utilized in our experiments, and (ii) how under these modifications we saw the robust guarantees of prior work's previously proposed and well-studied $\ell_0$--defense method vanish. Following this in \ref{sec:Results on Image Datasets}, we show how our $k$--truncated FC networks performed on MNIST, and propose a heuristically motivated extension of truncation to $2$--dimensional convolution layers, testing it on the CIFAR dataset.\par
For our work, we mainly utilize \texttt{sparse-rs} \cite{croce2020sparse}, a sparse black-box $l_0$-attack framework. Given a pixel budget $k$, time budget $t$, input image $\vx$, and a prediction model $f$, this attack performs a random search where it tries to change a set of $k$ pixels in $\vx$ that cause the new adversarial image $\vx'$ to be misclassified by $f$. 
The creators of \texttt{sparse-rs} have shown their framework outperforms all previous black- and white-box attacks, and hence we use this attack within our adversarial training framework and after training to approximately measure the \textit{robust accuracy} of our classifier. 
We also utilize the \texttt{Pointwise Attack} \cite{49schott2018adversarially} to directly compare our results with other $\ell_0$-defense techniques \cite{levine2019robustness}. This attack tries to greedily minimize the $\ell_0$--norm by first adding salt-and-pepper noise, and then repeatedly resetting perturbed pixels while keeping the image misclassified. Since here we can not directly control the number of allowed perturbations $k$, we only use this attack to measure the \textit{median adversarial attack magnitude} as was done in prior work \cite{levine2019robustness}, denoting this value with $\rho$. \par
Before moving on, we point out that we normalize the coordinates of our inputs to be within some defined range $[-a,a]$. By design, the $\ell_0$--attacks mentioned also require the \emph{perturbed} coordinates to lie within some range $[-\beta a,\beta a]$, meaning they are indeed $(\ell_0+\ell_\infty)$ bounded. Formally, we define these attacks as being bounded by an $\ell_0$--norm of $k$ , and an $\ell_\infty$--norm of $\beta a$, where $\beta$ is a factor by which we scale the original domain $[-a,a]$. Since our goal is to develop a defense against a true $\ell_0$--attack, unless otherwise stated, we set $\beta=100$ as this effectively removes the $\ell_\infty$ constraint.\par
The two defenses we consider when comparing our proposed framework are: the \textit{Analysis by Synthesis} (ABS) model \cite{49schott2018adversarially} and randomized ablation \cite{levine2019robustness}. The ABS model relies on optimization-based inference by using variational auto-encoders that take $50$ steps of gradient descent, repeating this $1000$ times for each prediction. Defenses based on randomized ablation use thousands of ablated samples for each input to construct a set of images, following which the classifier performs a majority vote on this set to decide the best label for the original image. On the other hand, our method's computational complexity comes from the first $k$--truncated FC layer, where if the input array has dimension $d$, removing the top and bottom $k$ only adds $O(d)$ (when $k$ is constant) more operations per neuron, which is small compared to the overall complexity of deep neural networks. Hence our truncated network was still fast compared to the regular network.\par 
For the ABS model on MNIST, using \texttt{sparse-rs} with an $\ell_0$--budget of $12$ and a time budget of $10,000$ the robust accuracy decreases to $45\%$, which was significantly lower than the previously reported $78\%$. Additionally, the \texttt{Pointwise Attack} was used to calculate $\rho$ to be $22$ pixels. Note that both of these results were achieved for $\beta=1$, when testing these statistics for higher $\beta\in(1,100]$ we found that both robust guarantees vanish within the first hundred iterations i.e., the robust accuracy became $0\%$, and $\rho$ became $1$ pixel. For methods utilizing randomized ablation, robust guarantees were improved in relation to the ABS model: $\rho$ was reported to be $31$ pixels when $\beta=1$. Using code provided by the authors \cite{levine2019robustness}, we were able to confirm that $\beta=1$ was used in their experiments, unfortunately we could not test their robust accuracy with the stronger \texttt{sparse-rs} framework, nor could we increase $\beta$ to see if their defense would break similar to the ABS model. Due to these reasons, and the fact that truncation can act independently of ablation, we do not compare our results directly with theirs. \par 
\begin{figure}[!t]
	\centering
	\includegraphics[width=5in]{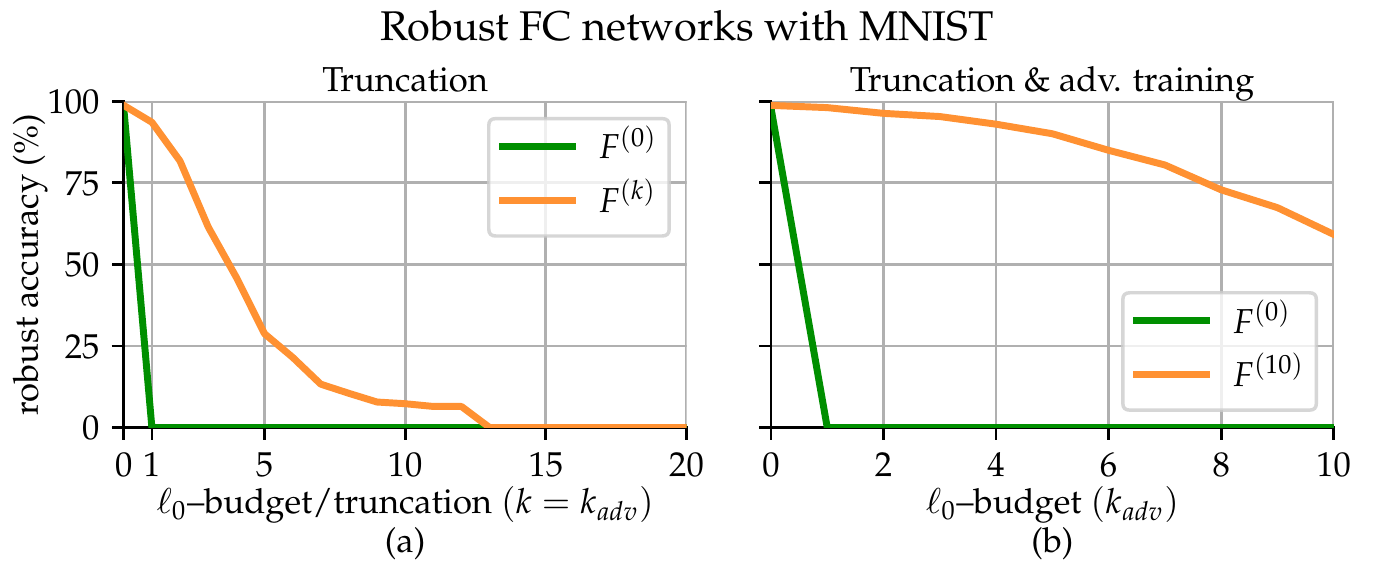}
	\caption{In (a) we show the robust accuracy of our $F^{(k)}$ (orange) and $F^{(0)}$ (green) without adversarial training, where$k=k_{adv}$ is shown on the x-axis. We see that $F^{(k)}$ outperforms $F^{(0)}$, but at $k\geq13$ the attack becomes too strong. In (b) we show the effect of adversarial training on $F^{(10)}$ (orange) and $F^{(0)}$ (green), varying the $\ell_0$--budget on the x-axis as $k$. We can see as compared to without adversarial training, $F^{(10)}$ has substantially improved.}
	\label{fig:MNIST}
\end{figure}
\begin{table}[!b]
	\centering
	\begin{tabular}{llllll}
		\toprule
		\multicolumn{3}{c}{Setup} &
		\multicolumn{3}{c}{Robust acc. \texttt{sparse-rs} (\%)}\\
		\cmidrule(r){1-3}\cmidrule(r){4-6}
		Architecture & Clean acc. (\%) & $\ell_0$--budget & $t=300$ & $t=1000$ & $t=5000$\\
		\midrule
		$F^{(0)}$ & $98.02$ & $3,5,8$ & $0.00$ & $0.00$ & $0.00$ \\
		$F^{(10)}$ & $98.73$ & $3$ & $95.51$ & $94.73$ & $92.97$ \\
		$F^{(10)}$ & $98.73$ & $5$ & $93.55$ & $89.65$ & $81.84$ \\
		$F^{(10)}$ & $98.73$ & $8$ & $85.94$ & $73.24$ & $58.79$ \\
		\midrule
		$\textrm{VGG}^{(0)}$ & $87.68$ & $3$ &  $64.45$ & $52.73$ & $39.65$ \\
		$\textrm{VGG}^{(0)}$ & $87.68$ & $8$ & $52.92$ & $40.23$ & $26.36$ \\
		$\textrm{VGG}^{(10)}$ & $87.27$ & $3$ & $77.73$ & $71.67$ & $67.77$  \\
		$\textrm{VGG}^{(10)}$ & $87.27$ & $8$ & $70.70$ & $61.33$ & $53.13$ \\
		\bottomrule
	\end{tabular}
	\caption{Adversarial training using \texttt{sparse-rs} on MNIST and CIFAR. The table above shows the final robust accuracy of $F^{(10)}$ and $F^{(0)}$ after adversarial training on MNIST, as well as $\textrm{VGG}^{(0)}$ $\textrm{VGG}^{(10)}$ on CIFAR. We give the clean accuracy (Acc. \%) of the classifiers along with the $\ell_0$--budget used to attack them. We then show the robust accuract (Rob. acc.) as we vary the adversary's time budget $t$. Note $F^{(0)}$ fails for any budget greater than zero.}
	\label{tab:rob_acc}
\end{table}
\subsection{Results on MNIST and CIFAR}\label{sec:Results on Image Datasets}
We begin by discussing our results when testing the proposed $k$--truncated FC network on the MNIST dataset. All networks $F^{(k)}$ were trained via stochastic gradient descent, and had the same architecture, consisting of $5$ FC layers with ReLU activations between them, where the first layer was replaced with the $k$--truncated matrix transformation from \eqref{eq:matrix_trunc}. \par
First, we look at the affect the truncation parameter $k$ and $\ell_0$--budget $k_{adv}$ have on the initial robust accuracy, without adversarial training. 
We can see the strength of the attack portrayed in Fig.~\ref{fig:MNIST}a, where the unprotected network $F^{(0)}$ fails for $k_{adv}\geq1$, and even $F^{(k)}$ becomes fully susceptible to $\ell_0$--attacks with budget $k_{adv}\geq13$. We set out to improve the robustness of the specific classifier $F^{(10)}$ via adversarial training, where we demonstrate this robustness by testing $F^{(10)}$ against $\ell_0$--attacks with budgets $k_{adv}\leq10$.\par
For adversarial training we used the \texttt{sparse-rs} attack with $\ell_0$--budget $k_{adv}=10$ and time budget $t=300$ queries. 
With this in mind, we believe our robust accuracy should be tested with an attack of similar time budget. However, we use a much larger time budget of $t=5000$ queries for the results displayed in Table~\ref{tab:rob_acc}, while in Fig.~\ref{fig:MNIST} we use $t=1000$ queries.\par
We can see from Fig.~\ref{fig:MNIST}b that adversarial training improves the robust accuracy of our $k$--truncated classifier, agreeing with our theory. When comparing to the initial results in Fig.~\ref{fig:MNIST}a, adversarial training shows no effect on the robust accuracy of the regular classifier $F^{(0)}$, while displaying substantial improvements when applied to $F^{(10)}$.\par
\begin{table}[!t]
	\centering
	\begin{tabular}{llll}
		\toprule
		\multicolumn{2}{c}{Setup} &
		\multicolumn{2}{c}{Median (pixels)} \\
		\cmidrule(r){1-2}\cmidrule(r){3-4}
		Architecture & Dataset  & $\beta=100$ & $\beta=1$\\
		\midrule
		$F^{(0)}$  & MNIST & $1$ & $13$\\
		$F^{(10)}$ & MNIST  & $\mathbf{17}$ & $\mathbf{21}$\\
		$\textrm{VGG}$  & CIFAR & $2$ & $3$  \\
		$\textrm{VGG}^{(10)}$ & CIFAR  & $11$ & $17$\\
		\bottomrule
	\end{tabular}
	\caption{$\rho$ using the \texttt{Pointwise Attack}. The table above shows the median adversarial attack magnitude denoted as $\rho$ for both our fully connected and convolution networks. Note that the ABS model as well as randomized ablation are not effective when $\beta=100$, while for $\beta=1$ the ABS model achieves an identical $\rho=21$.}
	\label{tab:median_l0}
\end{table}
We highlight these results in Table~\ref{tab:rob_acc}, showing that for lower budgets $k_{adv}$ we can maintain high robust accuracy even as the time budget $t$ increases. Also, there is no loss in classification accuracy from truncation as both $F^{(0)}$ and $F^{(10)}$ reach the same clean accuracy after adversarial training, which is slightly
lower than the base classifier's clean accuracy of $99.3\%$. Here we refer to the accuracy on the test set without adversarial examples as the \textit{clean accuracy}, and the classifier derived when trained without an adversary as the \textit{base classifier}. We note that for higher $k_{adv}$ one can only expect so much improvement until the $\ell_0$--attack becomes too powerful for any classifier, although we suspect tuning $k$ and running the attack for longer while training can help improve robustness further.\par
To underline our results we refer to the \texttt{Pointwise Attack}, where we display in Table~\ref{tab:median_l0} the values of $\rho$ for our classifiers. We ran $10$ iterations of the attack, utilizing the entire test set of MNIST images. We confirm that $F^{(10)}$ outperforms its unprotected counterpart $F^{(0)}$, and does just as well as the ABS model even when $\beta=1$ \cite{49schott2018adversarially}. Since we know that both the ABS model and $F^{(0)}$ have no robustness guarantees when $\beta=100$, we think it is significant that under this setting $F^{(10)}$ still achieves a high $\rho$ of $17$ pixels.\par
We believe our results for MNIST convey the efficiency and potential of utilizing truncation when designing robust classifiers. We also understand that in order to expand the applicability of truncation, we need to consider how it can be utilized within convolutional neural networks. Unlike with FC layers, the extension of truncation to $2d$-convolutional layers is heuristically motivated, where our approach is directly applying truncation before the first layer of $\textrm{VGG}$--19 \cite{vgg}.\par 
As with FC networks, $\textrm{VGG}^{(0)}$ and its $k$--truncated counterpart $\textrm{VGG}^{(10)}$ were trained with an $\ell_0$--budget $k_{adv}=10$, and attacked with varying time budgets and $\ell_0$--budgets. The results are displayed in Table~\ref{tab:rob_acc}. We see that although $\textrm{VGG}^{(0)}$ is able to maintain a robust accuracy above $0\%$ thanks to adversarial training, we can improve this by adding our truncation component. We also see that the clean accuracy did not suffer when utilizing truncation, and the end result was comparable to the base classifier's accuracy of approximately $91\%$. We think this is significant since prior methods showed large trade-offs between robust accuracy and test set performance \cite{49schott2018adversarially,levine2019robustness}, while truncation combined with adversarial training does strictly better than adversarial training alone.
\newpage
\bibliographystyle{unsrt}
\bibliography{main}

\begin{thebibliography}{10}

\bibitem{imagenet}
Alex Krizhevsky, Ilya Sutskever, and Geoffrey~E. Hinton.
\newblock Imagenet classification with deep convolutional neural networks.
\newblock In {\em Proceedings of the 25th International Conference on Neural
  Information Processing Systems - Volume 1}, NIPS'12, page 1097–1105, Red
  Hook, NY, USA, 2012. Curran Associates Inc.

\bibitem{andor2016globally}
Daniel Andor, Chris Alberti, David Weiss, Aliaksei Severyn, Alessandro Presta,
  Kuzman Ganchev, Slav Petrov, and Michael Collins.
\newblock Globally normalized transition-based neural networks.
\newblock {\em arXiv preprint arXiv:1603.06042}, 2016.

\bibitem{mnih2013playing}
Volodymyr Mnih, Koray Kavukcuoglu, David Silver, Alex Graves, Ioannis
  Antonoglou, Daan Wierstra, and Martin Riedmiller.
\newblock Playing atari with deep reinforcement learning.
\newblock {\em arXiv preprint arXiv:1312.5602}, 2013.

\bibitem{G0_article}
David Silver, Aja Huang, Christopher Maddison, Arthur Guez, Laurent Sifre,
  George Driessche, Julian Schrittwieser, Ioannis Antonoglou, Veda
  Panneershelvam, Marc Lanctot, Sander Dieleman, Dominik Grewe, John Nham, Nal
  Kalchbrenner, Ilya Sutskever, Timothy Lillicrap, Madeleine Leach, Koray
  Kavukcuoglu, Thore Graepel, and Demis Hassabis.
\newblock Mastering the game of go with deep neural networks and tree search.
\newblock {\em Nature}, 529:484--489, 01 2016.

\bibitem{Biggio_2013}
Battista Biggio, Igino Corona, Davide Maiorca, Blaine Nelson, Nedim Šrndić,
  Pavel Laskov, Giorgio Giacinto, and Fabio Roli.
\newblock Evasion attacks against machine learning at test time.
\newblock {\em Lecture Notes in Computer Science}, page 387–402, 2013.

\bibitem{szegedy2014intriguing}
Christian Szegedy, Wojciech Zaremba, Ilya Sutskever, Joan Bruna, Dumitru Erhan,
  Ian Goodfellow, and Rob Fergus.
\newblock Intriguing properties of neural networks.
\newblock {\em arXiv preprint arXiv:1312.6199}, 2013.

\bibitem{goodfellow2015explaining}
Ian~J Goodfellow, Jonathon Shlens, and Christian Szegedy.
\newblock Explaining and harnessing adversarial examples.
\newblock {\em arXiv preprint arXiv:1412.6572}, 2014.

\bibitem{37madry2019deep}
Aleksander Madry, Aleksandar Makelov, Ludwig Schmidt, Dimitris Tsipras, and
  Adrian Vladu.
\newblock Towards deep learning models resistant to adversarial attacks.
\newblock {\em arXiv preprint arXiv:1706.06083}, 2017.

\bibitem{13carlini2017evaluating}
Nicholas Carlini and David Wagner.
\newblock Towards evaluating the robustness of neural networks.
\newblock In {\em 2017 ieee symposium on security and privacy (sp)}, pages
  39--57. IEEE, 2017.

\bibitem{17croce2019sparse}
Francesco Croce and Matthias Hein.
\newblock Sparse and imperceivable adversarial attacks.
\newblock In {\em Proceedings of the IEEE/CVF International Conference on
  Computer Vision}, pages 4724--4732, 2019.

\bibitem{52szegedy2014intriguing}
Christian Szegedy, Wojciech Zaremba, Ilya Sutskever, Joan Bruna, Dumitru Erhan,
  Ian Goodfellow, and Rob Fergus.
\newblock Intriguing properties of neural networks.
\newblock {\em arXiv preprint arXiv:1312.6199}, 2013.

\bibitem{32kurakin2017adversarial}
Alexey Kurakin, Ian Goodfellow, Samy Bengio, et~al.
\newblock Adversarial examples in the physical world, 2016.

\bibitem{athalye2018obfuscated}
Anish Athalye, Nicholas Carlini, and David Wagner.
\newblock Obfuscated gradients give a false sense of security: Circumventing
  defenses to adversarial examples.
\newblock In {\em International Conference on Machine Learning}, pages
  274--283. PMLR, 2018.

\bibitem{marzi2018sparsity}
Zhinus Marzi, Soorya Gopalakrishnan, Upamanyu Madhow, and Ramtin Pedarsani.
\newblock Sparsity-based defense against adversarial attacks on linear
  classifiers.
\newblock In {\em 2018 IEEE International Symposium on Information Theory
  (ISIT)}, pages 31--35. IEEE, 2018.

\bibitem{add_4}
Yang Bai, Yuyuan Zeng, Yong Jiang, Shu-Tao Xia, Xingjun Ma, and Yisen Wang.
\newblock Improving adversarial robustness via channel-wise activation
  suppressing.
\newblock In {\em International Conference on Learning Representations}, 2020.

\bibitem{40moosavidezfooli2016deepfool}
Seyed-Mohsen Moosavi-Dezfooli, Alhussein Fawzi, and Pascal Frossard.
\newblock Deepfool: a simple and accurate method to fool deep neural networks.
\newblock In {\em Proceedings of the IEEE conference on computer vision and
  pattern recognition}, pages 2574--2582, 2016.

\bibitem{48rony2019decoupling}
J{\'e}r{\^o}me Rony, Luiz~G Hafemann, Luiz~S Oliveira, Ismail~Ben Ayed, Robert
  Sabourin, and Eric Granger.
\newblock Decoupling direction and norm for efficient gradient-based l2
  adversarial attacks and defenses.
\newblock In {\em Proceedings of the IEEE/CVF Conference on Computer Vision and
  Pattern Recognition}, pages 4322--4330, 2019.

\bibitem{add_1}
Chang Xiao, Peilin Zhong, and Changxi Zheng.
\newblock Enhancing adversarial defense by k-winners-take-all.
\newblock In {\em 8th International Conference on Learning Representations},
  2020.

\bibitem{add_2}
J~Lin, C~Gan, and S~Han.
\newblock Defensive quantization: When efficiency meets robustness.
\newblock {\em Artificial Intelligence, Communication, Imaging, Navigation,
  Sensing Systems}, page~8, 2019.

\bibitem{14chen2018ead}
Pin-Yu Chen, Yash Sharma, Huan Zhang, Jinfeng Yi, and Cho-Jui Hsieh.
\newblock Ead: elastic-net attacks to deep neural networks via adversarial
  examples.
\newblock In {\em Proceedings of the AAAI Conference on Artificial
  Intelligence}, volume~32, 2018.

\bibitem{39modas2019sparsefool}
Apostolos Modas, Seyed-Mohsen Moosavi-Dezfooli, and Pascal Frossard.
\newblock Sparsefool: a few pixels make a big difference.
\newblock In {\em Proceedings of the IEEE/CVF Conference on Computer Vision and
  Pattern Recognition}, pages 9087--9096, 2019.

\bibitem{tsipras2018robustness}
Dimitris Tsipras, Shibani Santurkar, Logan Engstrom, Alexander Turner, and
  Aleksander Madry.
\newblock Robustness may be at odds with accuracy.
\newblock {\em arXiv preprint arXiv:1805.12152}, 2018.

\bibitem{su2018robustness}
Dong Su, Huan Zhang, Hongge Chen, Jinfeng Yi, Pin-Yu Chen, and Yupeng Gao.
\newblock Is robustness the cost of accuracy?--a comprehensive study on the
  robustness of 18 deep image classification models.
\newblock In {\em Proceedings of the European Conference on Computer Vision
  (ECCV)}, pages 631--648, 2018.

\bibitem{raghunathan2019adversarial}
Aditi Raghunathan, Sang~Michael Xie, Fanny Yang, John~C. Duchi, and Percy
  Liang.
\newblock Adversarial training can hurt generalization, 2019.

\bibitem{zhang2019theoretically}
Hongyang Zhang, Yaodong Yu, Jiantao Jiao, Eric Xing, Laurent El~Ghaoui, and
  Michael Jordan.
\newblock Theoretically principled trade-off between robustness and accuracy.
\newblock In {\em International Conference on Machine Learning}, pages
  7472--7482. PMLR, 2019.

\bibitem{javanmard2020precise}
Adel Javanmard, Mahdi Soltanolkotabi, and Hamed Hassani.
\newblock Precise tradeoffs in adversarial training for linear regression.
\newblock In {\em Conference on Learning Theory}, pages 2034--2078. PMLR, 2020.

\bibitem{42papernot2015limitations}
Nicolas Papernot, Patrick McDaniel, Somesh Jha, Matt Fredrikson, Z~Berkay
  Celik, and Ananthram Swami.
\newblock The limitations of deep learning in adversarial settings.
\newblock In {\em 2016 IEEE European symposium on security and privacy
  (EuroS\&P)}, pages 372--387. IEEE, 2016.

\bibitem{49schott2018adversarially}
Lukas Schott, Jonas Rauber, Matthias Bethge, and Wieland Brendel.
\newblock Towards the first adversarially robust neural network model on mnist.
\newblock {\em arXiv preprint arXiv:1805.09190}, 2018.

\bibitem{croce2020sparse}
Francesco Croce, Maksym Andriushchenko, Naman~D Singh, Nicolas Flammarion, and
  Matthias Hein.
\newblock Sparse-rs: a versatile framework for query-efficient sparse black-box
  adversarial attacks.
\newblock {\em arXiv preprint arXiv:2006.12834}, 2020.

\bibitem{levine2019robustness}
Alexander Levine and Soheil Feizi.
\newblock Robustness certificates for sparse adversarial attacks by randomized
  ablation.
\newblock In {\em Proceedings of the AAAI Conference on Artificial
  Intelligence}, volume~34, pages 4585--4593, 2020.

\bibitem{li2019adversarial}
Juncheng Li, Frank Schmidt, and Zico Kolter.
\newblock Adversarial camera stickers: A physical camera-based attack on deep
  learning systems.
\newblock In {\em International Conference on Machine Learning}, pages
  3896--3904. PMLR, 2019.

\bibitem{grosse2016adversarial}
Kathrin Grosse, Nicolas Papernot, Praveen Manoharan, Michael Backes, and
  Patrick McDaniel.
\newblock Adversarial perturbations against deep neural networks for malware
  classification.
\newblock {\em arXiv preprint arXiv:1606.04435}, 2016.

\bibitem{jin2019bert}
Di~Jin, Zhijing Jin, Joey~Tianyi Zhou, and Peter Szolovits.
\newblock Is bert really robust? natural language attack on text classification
  and entailment.
\newblock {\em arXiv preprint arXiv:1907.11932}, 2, 2019.

\bibitem{shamir2019simple}
Adi Shamir, Itay Safran, Eyal Ronen, and Orr Dunkelman.
\newblock A simple explanation for the existence of adversarial examples with
  small hamming distance.
\newblock {\em arXiv preprint arXiv:1901.10861}, 2019.

\bibitem{41narodytska2016simple}
Nina Narodytska and Shiva~Prasad Kasiviswanathan.
\newblock Simple black-box adversarial perturbations for deep networks.
\newblock {\em arXiv preprint arXiv:1612.06299}, 2016.

\bibitem{delgosha2021robust}
Payam Delgosha, Hamed Hassani, and Ramtin Pedarsani.
\newblock Robust classification under {$\ell_0$} attack for the gaussian
  mixture model.
\newblock {\em arXiv preprint arXiv:2104.02189, to appear in SIAM Journal on
  Mathematics of Data Science, 2022.}, 2021.

\bibitem{MNIST}
Yann LeCun and Corinna Cortes.
\newblock {MNIST} handwritten digit database.
\newblock 2010.

\bibitem{CIFAR}
Alex Krizhevsky, Vinod Nair, and Geoffrey Hinton.
\newblock Cifar-10 (canadian institute for advanced research).

\bibitem{huber2004robust}
Peter~J Huber.
\newblock {\em Robust statistics}, volume 523.
\newblock John Wiley \& Sons, 2004.

\bibitem{relu}
Andrew~L. Maas.
\newblock Rectifier nonlinearities improve neural network acoustic models.
\newblock 2013.

\bibitem{vgg}
Karen Simonyan and Andrew Zisserman.
\newblock Very deep convolutional networks for large-scale image recognition.
\newblock {\em arXiv preprint arXiv:1409.1556}, 2014.

\end{thebibliography}

\appendix

\newpage
\section{Proof of Theorem~\ref{thm:lower-bound-diag}}
\label{sec:app-lower-bound}

Here, we propose a strategy for the adversary and use it to prove
Theorem~\ref{thm:lower-bound-diag}.
Recall that $\vnu = \Sigma^{-1/2} \vmu$. Since $\Sigma$ is diagonal, $\nu_i =
\mu_i / \sigma_i$. We will fix a set of
coordinates $A \subseteq [d]$ and a specific value for the budget $k(A) =
\snorm{\vnu_A}_1 \log d$. We introduce a randomized strategy for the adversary  with the following
properties:  (i) it can change up to $k(A)$ coordinates of the input; and (ii)
all the changed coordinates belong to $A$, i.e. the coordinates in $A^c$ are
left untouched. We denote this adversarial strategy by $\adv(A)$.    Given $A \subset [d]$,
having observed $(\vx,y)$, $\adv(A)$ follows the procedure explained below. Let
$\vZ = (Z_1,\cdots, Z_d) \in \reals^d$ be a random vector that $\adv(A)$
constructs using the true input $\vx$. First of all, recall that $\adv(A)$ does
not touch the coordinates that are not in $A$, i.e. for $i \in A^c$ we let $Z_i
= x_i$.  For each $i \in A$, the adversary's act is simple: it either leaves the
value unchanged, i.e. $Z_i = x_i$, or it erases the value, i.e. $Z_i \sim
\text{Unif}[-1,1]$--a completely random value between $-1$ and $+1$. This binary
decision is encoded through a Bernoulli random variable $I_i$ taking value $0$
with probability $p_i(x_i,y)$ and value $1$ otherwise. Here $p_i(x_i,y)$ is
defined  as 
\vspace{-.2cm}
\begin{equation*}
p_i(x_i, y) :=
\begin{cases}
\frac{\exp(-(x_i + y\mu_i)^2 / 2\sigma_i^2)}{\exp(-(x_i - y\mu_i)^2 / 2\sigma_i^2)}  & \text{if  }  \sgn(x_i) = \sgn(y\mu_i) \\
0 & \text{otherwise}
\end{cases}
\end{equation*}
Note that the condition $\sgn(x_i) = \sgn(y\mu_i)$ ensures that $p_i(x_i, y) \leq 1$. In summary, for each $i \in A$, $\adv(A)$ lets 
\begin{equation}
\label{eq:adv-Zi-Ii-def}
Z_i = x_i \times (1-I_i) + \text{Unif}[-1,1] \times I_i,
\end{equation}
where $I_i = \text{Bernoulli} \left(1 - p_i(x_i, y_i) \right)$, and the random variables $I_i$ are generated completely independently w.r.t. all the other variables.  
It is easy to see that the following holds for the conditional density of
$\vZ_A$ given $y$ 
\vspace{-.05cm}
\begin{equation}
\label{eq:adv-Z-cond-f}
f_{\vZ_A|y}(\vec{z}_A | 1) = f_{\vZ_A|y}(\vec{z}_A | -1) = \prod_{i \in A} \left[  \frac{1}{\sqrt{2 \pi \sigma_i^2}} \exp\left( - \frac{(|z_i| + |\mu_i|)^2}{2\sigma_i^2}\right) + \frac{\alpha_i}{2} \one{z_i \in [-1,1]}   \right],
\end{equation}
\vspace{-.15cm}
where for $i \in A$
\begin{equation*}
\alpha_i := \pr{I_i = 1 | y = 1} = \pr{I_i = 1| y = -1} = \int_0^\infty [1-p_i(t,1)] f_{x_i|y}(t|1) dt.
\end{equation*}
In other words, $\alpha_i$ is the probability of changing coordinate $i$. 
Finally, $\adv(A)$ checks if the vectors $\vZ$ and $\vx$ differ within the
budget constraint $k(A) :=  \snorm{\vnu_A}_1 \log d$. 
Define $\vxp$ as follows:
\begin{equation}
\label{eq:vXp-def-A}
\vxp :=
\begin{cases}
\vZ & \text{if } \sum_{i \in A} I_i \leq  \snorm{\vnu_A}_1 \log d\\
\vx & \text{o.t.w.}
\end{cases}
\end{equation}
It can be shown that with
high probability, $\vec{Z}$ is indeed within the specified budget and  $\vxp
= \vZ$.
From this definition, it is evident that with probability one we have
\begin{equation}
\label{eq:vXp-vX-norm-0-log-d-nu-A-1}
\snorm{\vxp - \vx}_0 \leq \snorm{\vnu_A}_1 \log d,
\end{equation}
and hence $\adv(A)$ is a randomized adversarial strategy that only changes the coordinates in $A$ and has budget $k(A) = \snorm{\vnu_A}_1 \log d$. 

Now we use this adversarial strategy to prove
Theorem~\ref{thm:lower-bound-diag}. Before doing so, we need the following
lemma.

\begin{lem}
	\label{lem:error-lower-bound-adv-strategy}
	For any random adversarial strategy with budget $k$ which has a density function $f_{\vxp|\vx, y}$, we have
	\begin{equation*}
	\optloss_{\vmu,\Sigma}(k) \geq \frac{1}{2} \pr{f_{\vxp|y}(\vxp |1) = f_{\vxp|y}(\vxp|-1)} + \pr{f_{\vxp|y}(\vxp|-1) > f_{\vxp|y}(\vxp|1) \bigg| y = 1},
	\end{equation*}
\end{lem}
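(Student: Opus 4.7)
The plan is to reduce the bound on $\optloss_{\vmu, \Sigma}(k)$ to the Bayes error of the two-point hypothesis test between $y = +1$ and $y = -1$ that treats the given randomized adversary as a noisy channel from $y$ to $\vxp$, and then to rewrite that Bayes error in the asymmetric form appearing on the right-hand side of the lemma.

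First I would observe that since the adversary always produces a perturbation inside $\mB_0(\vx, k)$, for every classifier $\mC$ we have
\[
\loss_{\vmu, \Sigma}(\mC, k) = \ev{\max_{\vxpp \in \mB_0(\vx, k)} \one{\mC(\vxpp) \neq y}} \;\geq\; \ev{\one{\mC(\vxp) \neq y}} = \pr{\mC(\vxp) \neq y},
\]
where the expectation on both ends is taken jointly over $(\vx, y) \sim \mD$ and the adversary's internal randomness. Taking the infimum over $\mC$ on both sides yields $\optloss_{\vmu, \Sigma}(k) \geq \inf_{\mC} \pr{\mC(\vxp) \neq y}$, and the right-hand side is exactly the Bayes error for predicting $y$ from the observation $\vxp$ under the uniform prior $\pr{y=1}=\pr{y=-1}=1/2$.

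Next I would compute this Bayes error using the standard formula $\frac{1}{2}\int \min\{f_{\vxp|y}(\vxp|1), f_{\vxp|y}(\vxp|-1)\}\,d\vxp$ and split the integration into the three regions determined by the pointwise comparison of the conditional densities. The tie set $\{f_{\vxp|y}(\vxp|1) = f_{\vxp|y}(\vxp|-1)\}$ contributes exactly $\frac{1}{2}\pr{f_{\vxp|y}(\vxp|1) = f_{\vxp|y}(\vxp|-1)}$, while the two strict-inequality regions contribute $\frac{1}{2}\pr{f_{\vxp|y}(\vxp|-1) > f_{\vxp|y}(\vxp|1) \mid y = 1}$ and $\frac{1}{2}\pr{f_{\vxp|y}(\vxp|1) > f_{\vxp|y}(\vxp|-1) \mid y = -1}$ respectively, after recognizing each integral over a strict-inequality region as a conditional probability under the appropriate likelihood.

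Finally, I would use the symmetry of the Gaussian mixture $\mD$ under the joint flip $(y, \vx) \mapsto (-y, -\vx)$. For the adversarial strategies to which the lemma will actually be applied---in particular the randomized strategy $\adv(A)$ whose conditional density is computed in \eqref{eq:adv-Z-cond-f}---this symmetry is inherited by $\vxp$, so that $f_{\vxp|y}(\vxp|1) = f_{\vxp|y}(-\vxp|-1)$. A change of variables $\vxp \mapsto -\vxp$ then identifies the two non-tie conditional probabilities from the previous step, and their sum collapses to the single term $\pr{f_{\vxp|y}(\vxp|-1) > f_{\vxp|y}(\vxp|1) \mid y = 1}$ appearing in the statement.

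The main obstacle is making the final symmetrization step clean for every adversary rather than just for $\adv(A)$: for a generic $f_{\vxp|\vx,y}$ the two non-tie conditional probabilities need not coincide, so the cleanest route is either to keep the bound in its symmetric form throughout and specialize at the point where $\adv(A)$ is substituted into the proof of Theorem~\ref{thm:lower-bound-diag}, or to note that the lemma will only be invoked for adversaries inheriting the $(y,\vx)\mapsto(-y,-\vx)$ symmetry. The preceding steps---the reduction from robust loss to Bayes error and the pointwise-minimum integration---are entirely routine.
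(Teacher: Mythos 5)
Your proposal follows essentially the same route as the paper's one-line proof: interpret the right-hand side as the Bayes error of the MAP test for $y$ given $\vxp$, with the randomized adversary treated as a known channel, and observe that no classifier facing the worst-case perturbation can beat the Bayes-optimal rule that knows the adversary's strategy. Your closing caveat is well taken and is in fact glossed over by the paper: for a generic kernel $f_{\vxp|\vx,y}$ the two non-tie contributions $\tfrac{1}{2}\pr{f_{\vxp|y}(\vxp|-1)>f_{\vxp|y}(\vxp|1)\mid y=1}$ and $\tfrac{1}{2}\pr{f_{\vxp|y}(\vxp|1)>f_{\vxp|y}(\vxp|-1)\mid y=-1}$ need not coincide, so the asymmetric form of the stated bound genuinely relies on the $(y,\vx)\mapsto(-y,-\vx)$ symmetry that $\adv(A)$ inherits from the Gaussian mixture; the paper simply asserts that the right-hand side equals the Bayes error without this qualification.
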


\begin{proof}
	Note that the right hand side is indeed  the Bayes optimal error associated
	with the MAP estimator assuming that the classifier knows adversary's
	strategy. Since the classifier does not know the adversary's strategy in
	general, the right hand side is indeed a lower bound on the optimal robust
	classification error.
\end{proof}

Now we are ready to prove Theorem~\ref{thm:lower-bound-diag}.

\begin{proof}[Proof of Theorem~\ref{thm:lower-bound-diag}]
	Note that when $A$ is empty, there is no adversarial modification and the
	standard Bayes analysis implies that 
	$\optloss_{\vmu, \Sigma}(0) = \bar{\Phi}(\snorm{\vnu}_2) =
	\bar{\Phi}(\snorm{\vnu_{A^c}}_2)$ and the desired bound holds. Hence, we may
	assume that $A$ is nonempty for the rest of the proof.
	
	Note that due to~\eqref{eq:vXp-vX-norm-0-log-d-nu-A-1}, the 
	randomized strategy $\adv(A)$ is valid for the adversary given the  budget   $\snorm{\vnu}_1 \log d$.
	Thereby  we may use Lemma~\ref{lem:error-lower-bound-adv-strategy} with $\adv(A)$ to bound
	$\optloss_{\vmu, \Sigma}(\snorm{\vnu_A}_1 \log d)$ from below. Before that, we
	show that with high probability under the above randomized strategy for the
	adversary, recalling the definition of random variables $I_i$ for $i \in A$ from
	\eqref{eq:adv-Zi-Ii-def}, we have $\sum_{i \in A} I_i \leq \snorm{\vnu_A}_1 \log d$ and hence
	$\vxp = \vZ$. It is easy to see that for each $i$, $\pr{I_i = 1| y = 1} =
	\pr{I_i = 1 | y = -1}$; therefore,
	\begin{align*}
	\pr{I_i = 1} = \pr{I_i = 1 | y = \sgn(\mu_i)} &= \int_{0}^\infty [1 - p_i(t, \sgn(\mu_i))] f_{x_i | y}(t|\sgn(\mu_i)) d t \\
	&= \int_0^\infty \left[ 1 - \frac{\exp(-(t+|\mu_i|)^2 / 2\sigma_i^2)}{\exp(-(t-|\mu_i|)^2 / 2\sigma_i^2)} \right]\exp\left( -(t-|\mu_i|)^2/2\sigma_i^2 \right) d t \\
	&= 1 - \bar{\Phi}(|\nu_i|) \\
	&= \text{Erf}(|\nu_i|/\sqrt{2}) \\
	&\leq \left( \sqrt{\frac{2}{\pi}} |\nu_i|\right) \wedge 1.
	\end{align*}
	Hence, we have
	\begin{equation*}
	\pr{I_i = 1} = \pr{I_i = 1 | y = 1} = \pr{I_i = 1 | y = -1} \leq \left( \sqrt{\frac{2}{\pi}} |\nu_i|\right) \wedge 1.
	\end{equation*}
	Therefore, using Markov's inequality, if $I$ is the indicator of the event $\sum_{i \in
		A} I_i > \snorm{\vnu_A}_1 \log d$, we have
	\begin{equation}
	\label{eq:pr-I-1-bound-1-logd}
	\pr{I = 1 } = \pr{I= 1 | y=1} = \pr{I = 1 | y = -1}\leq \frac{\sqrt{2 / \pi} \sum_{i \in A} |\nu_i|}{\snorm{\vnu_A}_1 \log d} \leq \frac{1}{\log d}.
	\end{equation}
	Now, we bound
	$\optloss_{\vmu, \Sigma}(\snorm{\vnu_A}_1 \log d)$ from below in 
	the following two cases.

	\underline{Case 1: $A = [d]$}. In this case, using
	Lemma~\ref{lem:error-lower-bound-adv-strategy}, we have
	\begin{align*}
	\optloss_{\vmu, \Sigma}(\snorm{\vnu_A}_1 \log d ) &\geq \frac{1}{2} \pr{f_{\vxp | y}(\vxp | 1) = f_{\vxp|y}(\vxp | -1)} \\
	&\stackrel{(a)}{=} \frac{1}{2} \pr{f_{\vxp | y}(\vxp | 1) = f_{\vxp|y}(\vxp | -1) \,|\, y=1 } \\
	&\geq \frac{1}{2} \pr{f_{\vxp | y}(\vxp | 1) = f_{\vxp|y}(\vxp | -1), I = 0 \,|\, y = 1} \\
	&\stackrel{(b)}{=} \frac{1}{2} \pr{f_{\vZ|y}(\vZ | 1) = f_{\vZ|y}(\vZ|-1) \,|\, y = -1} \\
	&\geq \frac{1}{2} \pr{f_{\vZ|y}(\vZ | 1) = f_{\vZ|y}(\vZ|-1) \,|\, y = 1} - \frac{1}{2} \pr{I = 1\,|\,y = 1}\\
	&\stackrel{(c)}{\geq} \frac{1}{2} - \frac{1}{2 \log d},
	\end{align*}
	where $(a)$ uses the symmetry, $(b)$ uses the fact that when $I = 0$, by
	definition we have $\vxp = \vZ$, and $(c)$ uses~\eqref{eq:adv-Z-cond-f} and~\eqref{eq:pr-I-1-bound-1-logd}.
	
	\underline{Case 2: $A \subsetneqq [d]$}. Using
	Lemma~\ref{lem:error-lower-bound-adv-strategy}, we have
	\begin{equation}
	\label{eq:adv-lowerb-bound-use-lemma-case-2}
	\begin{aligned}
	\optloss_{\vmu, \Sigma}(\snorm{\vnu_A}_1 \log d) &\geq \pr{f_{\vxp| y}(\vxp | -1) > f_{\vxp|y}(\vxp | 1) \,|\, y=1} \\
	&\geq \pr{f_{\vxp| y}(\vxp | -1) > f_{\vxp|y}(\vxp | 1), I = 0 \,|\, y=1} \\
	&\stackrel{(a)}{=}\pr{f_{\vZ| y}(\vZ | -1) > f_{\vZ|y}(\vZ | 1), I = 0 \,|\, y=1} \\
	&\geq \pr{f_{\vZ| y}(\vZ | -1) > f_{\vZ|y}(\vZ | 1)| y=1} - \pr{I = 1\,|\,y = 1} \\
	&\stackrel{(b)}{\geq}  \pr{f_{\vZ| y}(\vZ | -1) > f_{\vZ|y}(\vZ | 1)\,|\, y=1} - \frac{1}{\log d}
	\end{aligned}
	\end{equation}
	where $(a)$ uses the fact that by definition, when $I=0$, we have $\vxp = \vZ$,
	and $(b)$ uses~\eqref{eq:pr-I-1-bound-1-logd}.
	Note that since $Z_i$ are conditionally independent given $y$, we have
	\begin{equation*}
	f_{\vZ|y}(\vZ|y) = f_{\vZ_A | y} (\vZ_A|y) f_{\vZ_{A^c}|y}(\vZ_{A^c}|y).
	\end{equation*}
	But from~\eqref{eq:adv-Z-cond-f}, we have $f_{\vZ_A|y}(\vZ_A|1) =
	f_{\vZ_A|y}(\vZ_A|-1)$ with probability one. Using this
	in~\eqref{eq:adv-lowerb-bound-use-lemma-case-2}, we get
	\begin{equation*}
	\optloss_{\vmu, \Sigma}(\snorm{\vnu_A}_1 \log d) \geq \pr{f_{\vZ_{A^c}|y}(\vZ_{A^c}|-1) > f_{\vZ_{A^c}|y}(\vZ_{A^c}|1) | y = 1} -\frac{1}{\log d} = \bar{\Phi}(\snorm{\vnu_{A^c}}_2) - \frac{1}{\log d}.
	\end{equation*}
	
	We may combine the two cases following the convention that when $A = [d]$, $A^c
	= \emptyset$ and $\snorm{\vnu_{A^c}}_2 = 0$. This completes the proof.
\end{proof}

\section{Proof of Theorem~\ref{thm:alpha-trunc-alpha-opt}}
\label{sec:app-thm-alpha-proof}

Before giving the proof of Theorem~\ref{thm:alpha-trunc-alpha-opt}, we need to
make some definitions and state some lemmas. 
The proofs  of the the  lemmas are provided at the end of this
section. 

We first study the effect of truncation on the inner product.
Lemma~\ref{lem:trun-ip-bound} below from \cite{delgosha2021robust} provides
an upper bound on the deviation of the truncated inner product from the original
inner product. 

\begin{lem}[Lemma 1 in \cite{delgosha2021robust}]
	\label{lem:trun-ip-bound}
	Given $\vx, \vxp, \vw \in \reals^d$, for integer $k$ satisfying $\snorm{\vx - \vxp}_0 \leq k <
	d/2$, we have
	\begin{equation*}
	|\langle  \vw, \vxp \rangle_k - \langle \vw, \vx \rangle | \leq 8k \snorm{\vw \odot \vx}_\infty.
	\end{equation*}
\end{lem}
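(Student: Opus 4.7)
The plan is to reduce the statement to a question about the Hadamard vectors $\vu := \vw \odot \vx$ and $\vec{u}' := \vw \odot \vxp$, so that $\langle \vw, \vx \rangle = \sum_i u_i$ and $\langle \vw, \vxp \rangle_k$ is by definition the sum of the middle $d-2k$ entries of $\vec{u}'$ in its sorted order. Let $A$, $B$, $C$ be the positions corresponding to the top $k$, bottom $k$, and remaining $d-2k$ entries of $\vec{u}'$, which partition $[d]$ as disjoint sets because $k < d/2$, and let $I = \{i : x_i \neq x'_i\}$, so that $|I| \leq k$ and $u_i = u'_i$ on $I^c$.

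With this setup, the core identity I would use is
\[
\langle \vw, \vxp \rangle_k - \langle \vw, \vx \rangle = \sum_{i \in C \cap I} (u'_i - u_i) \; - \sum_{i \in A \cup B} u_i,
\]
which follows from $\sum_i u_i = \sum_{i \in A \cup B} u_i + \sum_{i \in C} u_i$ together with $u_i = u'_i$ on $C \setminus I$. The second sum has exactly $2k$ terms, each of absolute value at most $\snorm{\vw \odot \vx}_\infty$, so it contributes at most $2k \snorm{\vw \odot \vx}_\infty$. The first sum has at most $k$ terms, so the problem reduces to establishing the pointwise bound $|u'_i| \leq \snorm{\vw \odot \vx}_\infty$ for every $i \in C$.

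This pointwise bound is the heart of the argument, and I expect it to be the main obstacle. The mechanism I have in mind is a pigeonhole sandwich: for any $i \in C$, its rank in the descending sort of $\vec{u}'$ lies in $[k+1, d-k]$, so at least $k+1$ indices $j$ satisfy $u'_j \geq u'_i$ and at least $k+1$ satisfy $u'_j \leq u'_i$. Because $|I| \leq k$, each of these two groups must contain some index $j \notin I$, where $u'_j$ coincides with $u_j$ and hence is bounded in magnitude by $\snorm{\vu}_\infty$. Sandwiching $u'_i$ between such an unperturbed upper and lower value yields $|u'_i| \leq \snorm{\vu}_\infty$. Consequently $|u'_i - u_i| \leq 2 \snorm{\vu}_\infty$ on $C \cap I$, and the first sum contributes at most $2k \snorm{\vw \odot \vx}_\infty$.

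Adding the two pieces gives $4k \snorm{\vw \odot \vx}_\infty$, which is comfortably within the claimed bound of $8k \snorm{\vw \odot \vx}_\infty$ (the constant $8$ is loose by a factor of two under this argument). Conceptually, the decomposition isolates the two distinct sources of error that truncation must absorb: shedding up to $2k$ clean coordinates through the $A \cup B$ sum, and retaining up to $k$ adversarially perturbed coordinates inside $C$ whose values have been ``pushed toward the middle'' of $\vec{u}'$ and are therefore controlled by the sandwich argument, which itself works precisely because the truncation budget matches the adversary's sparsity budget.
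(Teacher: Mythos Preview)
Your argument is correct. The decomposition
\[
\langle \vw, \vxp \rangle_k - \langle \vw, \vx \rangle = \sum_{i \in C \cap I} (u'_i - u_i) - \sum_{i \in A \cup B} u_i
\]
is valid, and the sandwich step is sound: for $i \in C$, the set $A \cup \{i\}$ contains $k+1$ indices with $u'_j \geq u'_i$, so by pigeonhole at least one lies in $I^c$ and witnesses $u'_i \leq \snorm{\vu}_\infty$; the symmetric argument with $B \cup \{i\}$ gives the lower bound. Your final constant $4k$ is in fact sharper than the stated $8k$.

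As for comparison: the paper does not actually supply a proof of this lemma. It is quoted verbatim as Lemma~1 of \cite{delgosha2021robust} and used as a black box in the proof of Lemma~\ref{lem:general-robust-bound-linear-truncation}. So there is no in-paper argument to set yours against; your proof stands on its own as a clean self-contained verification, with the minor bonus that it shows the constant $8$ is loose.
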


Recall that in Section~\ref{sec:main-results}, to simplify the discussion, we
restrict ourselves to diagonal covariance matrices. However, in order to have a
general setup, here we  begin by proving an upper bound for the robust classification error of
the family of truncated linear classifiers. In this case, we assume that the
covariance matrix $\Sigma$ is positive definite, but does not need to be
diagonal. Lemma~\ref{lem:general-robust-bound-linear-truncation} below shows an
upper bound for the robust classification error of the $k$--truncated linear
classifier $\mCkw$.

\begin{lem}
	\label{lem:general-robust-bound-linear-truncation}
	We have
	\begin{equation*}
	\loss_{\vmu, \Sigma}(\mCkw, k) \leq \frac{1}{\sqrt{2 \log d}} + \bar{\Phi}\left( \frac{\langle \vw, \vmu\rangle  - 8k \snorm{\tSigma^{1/2} \vw}_\infty (1 + \sqrt{2 \log d})}{\snorm{\Sigma^{1/2} \vw}_2} \right),
	\end{equation*}
	where $\tSigma$ is the diagonal part of $\Sigma$. 
\end{lem}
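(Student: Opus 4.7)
The plan is to bound the conditional robust error given $y=+1$, which by symmetry of the distribution in $y$ equals the full robust error. Given $y=+1$ we have $\vx = \vmu + \vz$ with $\vz \sim \mN(0,\Sigma)$. Applying Lemma~\ref{lem:trun-ip-bound}, for every $\vxp \in \mB_0(\vx,k)$ we have $\langle\vw,\vxp\rangle_k \geq \langle\vw,\vx\rangle - 8k\snorm{\vw\odot\vx}_\infty$, so the adversary can only drive $\mCkw(\vxp)$ to a wrong sign when $\langle\vw,\vx\rangle \leq 8k\snorm{\vw\odot\vx}_\infty$. Consequently the conditional robust error is at most
\begin{equation*}
\pr{\langle\vw,\vmu\rangle + \langle\vw,\vz\rangle \leq 8k\snorm{\vw\odot(\vmu+\vz)}_\infty \,\big|\, y=1}.
\end{equation*}

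The next step is to control $\snorm{\vw\odot\vx}_\infty$ by splitting it into a deterministic and a stochastic piece. For the mean, I would apply Cauchy--Schwarz to the identity $\mu_i = e_i^\top \Sigma^{1/2}(\Sigma^{-1/2}\vmu)$ together with the working normalization $\snorm{\Sigma^{-1/2}\vmu}_2 = 1$ to get $|\mu_i| \leq \sqrt{\Sigma_{ii}} = \sqrt{\tSigma_{ii}}$, so $\snorm{\vw\odot\vmu}_\infty \leq \snorm{\tSigma^{1/2}\vw}_\infty$. For the noise, each $w_i z_i$ is zero-mean Gaussian with variance $w_i^2 \tSigma_{ii}$ whose maximum is $\snorm{\tSigma^{1/2}\vw}_\infty^{2}$; a Mill's-ratio tail bound $2\bar{\Phi}(x) \leq \sqrt{2/\pi}\,e^{-x^2/2}/x$ applied at $x = \sqrt{2 \log d}$, followed by a union bound over the $d$ coordinates, yields
\begin{equation*}
\pr{\snorm{\vw\odot\vz}_\infty > \snorm{\tSigma^{1/2}\vw}_\infty\sqrt{2\log d}} \leq \frac{\sqrt{2/\pi}}{\sqrt{2\log d}} \leq \frac{1}{\sqrt{2\log d}}.
\end{equation*}
Call this bad event $E^c$; on its complement $E$ we have $\snorm{\vw\odot\vx}_\infty \leq \snorm{\tSigma^{1/2}\vw}_\infty(1+\sqrt{2\log d})$, which produces the factor $(1+\sqrt{2\log d})$ appearing in the lemma.

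Finally, I would bound the total robust error by $\pr{E^c}$ plus the probability, conditional on $E$ and $y=1$, that $\langle\vw,\vz\rangle \leq -\langle\vw,\vmu\rangle + 8k\snorm{\tSigma^{1/2}\vw}_\infty(1+\sqrt{2\log d})$. Since $\langle\vw,\vz\rangle \sim \mN(0,\snorm{\Sigma^{1/2}\vw}_2^{2})$, this Gaussian tail probability is exactly the $\bar{\Phi}$--term in the statement, while $\pr{E^c}$ contributes the $1/\sqrt{2\log d}$ remainder; adding the two gives the claim. The main technical obstacle I anticipate is arranging the constants in the Gaussian-maximum step so that the remainder really comes in at $1/\sqrt{2\log d}$ (and verifying the deterministic bound $|\mu_i| \leq \sqrt{\tSigma_{ii}}$ in the general, not necessarily diagonal, covariance setting); both reduce to short arguments powered by the working normalization $\snorm{\Sigma^{-1/2}\vmu}_2 = 1$.
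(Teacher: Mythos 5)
Your proposal is correct and follows essentially the same route as the paper's proof: reduce to the event $\langle\vw,\vx\rangle\leq 8k\snorm{\vw\odot\vx}_\infty$ via Lemma~\ref{lem:trun-ip-bound} and symmetry, split $\vw\odot\vx$ into mean and noise parts, control the Gaussian maximum by a union bound at level $\sqrt{2\log d}$, and finish with the exact tail of $\langle\vw,\vz\rangle\sim\mN(0,\snorm{\Sigma^{1/2}\vw}_2^2)$. The only (harmless) divergence is the justification of $|\mu_i|\leq\sqrt{\tSigma_{ii}}$: you use Cauchy--Schwarz on $\mu_i=\langle\Sigma^{1/2}e_i,\Sigma^{-1/2}\vmu\rangle$ with the normalization $\snorm{\Sigma^{-1/2}\vmu}_2=1$, whereas the paper compares the Bayes error from $x_i$ alone against $\bar{\Phi}(1)$ --- both are valid in the general non-diagonal setting.
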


As a direct consequence, this lemma implies the following bound for the diagonal regime.

\begin{cor}
	\label{cor:linear-trucnated-general-bound-cor-to-diagonal}
	When the covariance matrix $\Sigma$ is diagonal, we have
	\begin{equation*}
	\loss_{\vmu, \Sigma}(\mCkw, k) \leq \frac{1}{\sqrt{2 \log d}} + \bar{\Phi}\left( \frac{\langle \tvw, \vnu\rangle  - 8k \snorm{\tvw}_\infty (1 + \sqrt{2 \log d})}{\snorm{ \tvw}_2} \right),
	\end{equation*}
	where $\tvw = \Sigma^{1/2} \vw$ and $\vnu = \Sigma^{-1/2} \vmu$.
\end{cor}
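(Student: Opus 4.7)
The plan is to derive the corollary as an immediate specialization of Lemma~\ref{lem:general-robust-bound-linear-truncation}, which already supplies a bound on $\loss_{\vmu, \Sigma}(\mCkw, k)$ for a general positive-definite $\Sigma$. The only thing to do is simplify that bound under the extra hypothesis that $\Sigma$ is diagonal and then perform a change of variables to express the result in terms of $\tvw = \Sigma^{1/2}\vw$ and $\vnu = \Sigma^{-1/2}\vmu$.

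The steps I would carry out, in order, are as follows. First, observe that when $\Sigma$ is diagonal its diagonal part $\tSigma$ coincides with $\Sigma$ itself, so $\tSigma^{1/2} = \Sigma^{1/2}$ and the term $\snorm{\tSigma^{1/2}\vw}_\infty$ appearing in Lemma~\ref{lem:general-robust-bound-linear-truncation} becomes simply $\snorm{\Sigma^{1/2}\vw}_\infty$. Second, substitute $\tvw = \Sigma^{1/2}\vw$ to rewrite both $\snorm{\Sigma^{1/2}\vw}_\infty = \snorm{\tvw}_\infty$ and $\snorm{\Sigma^{1/2}\vw}_2 = \snorm{\tvw}_2$. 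Third, handle the numerator by writing
\[
\langle \vw, \vmu \rangle = \vw^\top \vmu = (\Sigma^{-1/2}\tvw)^\top \vmu = \tvw^\top \Sigma^{-1/2}\vmu = \langle \tvw, \vnu\rangle,
\]
where I have used the symmetry of $\Sigma^{-1/2}$ (which holds because $\Sigma$ is diagonal with strictly positive entries, so $\Sigma^{1/2}$ and $\Sigma^{-1/2}$ are well-defined symmetric matrices). Plugging these three identifications back into the bound supplied by Lemma~\ref{lem:general-robust-bound-linear-truncation} yields exactly the claimed inequality.

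There is no real obstacle here: the statement is just a rewriting of Lemma~\ref{lem:general-robust-bound-linear-truncation} in the diagonal regime using more transparent variables. The only point that warrants a line of care is the verification that the substitution $\vw \mapsto \Sigma^{-1/2}\tvw$, $\vmu \mapsto \Sigma^{1/2}\vnu$ is invertible and preserves the quantities $\langle \vw, \vmu\rangle$, $\snorm{\Sigma^{1/2}\vw}_2$, and $\snorm{\Sigma^{1/2}\vw}_\infty$, all of which follow from $\Sigma$ being a diagonal matrix with strictly positive diagonal entries (as assumed in Section~\ref{sec:main-results}). Once those identifications are in place, the corollary follows with no additional estimation or probabilistic argument.
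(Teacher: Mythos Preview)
Your proposal is correct and matches the paper's approach exactly: the paper presents the corollary as ``a direct consequence'' of Lemma~\ref{lem:general-robust-bound-linear-truncation} without giving a separate proof, and the substitution $\tSigma = \Sigma$, $\tvw = \Sigma^{1/2}\vw$, $\vnu = \Sigma^{-1/2}\vmu$ that you spell out is precisely what is implicit in that remark.
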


From this point forward, in order to prove
Theorem~\ref{thm:alpha-trunc-alpha-opt}, we assume that the covariance matrix
$\Sigma$ is diagonal with positive diagonal entries $\sigma_1^2, \dots,
\sigma_d^2$. We define 
\begin{equation}
\label{eq:vnu-def}
\vnu := \Sigma^{-1/2} \vmu,
\end{equation}
so that $\nu_i = \mu_i / \sigma_i$ is the signal to noise ratio associated to
coordinate $i$. Without loss of generality, we may assume that
\begin{equation}
\label{eq:assumption-nu-sorted}
|\nu_1| \geq |\nu_2| \geq \dots \geq |\nu_d|.
\end{equation}
For $\bar{\Phi}(1) < \varepsilon < 1/2$, let $c(\varepsilon)$ be the
unique solution of $\bar{\Phi}(\sqrt{1 - c^2}) = \varepsilon$. Note that
$c(\varepsilon) \in (0,1)$. Moreover, given $c \in (0,1)$, we define
\begin{equation}
\label{eq:lambda-c-def}
\lambda_c := \min\{\lambda: \snorm{\vnu_{[1:\lambda]}}_2 \geq c\}.
\end{equation}
Note that since $c > 0$, we have $\lambda_c \geq 1$. Moreover, since $c < 1$ and
$\snorm{\vnu}_2 = 1$, we have
\begin{equation}
\label{eq:lambda_c_less_d}
\lambda_c < d.
\end{equation}

Using Lemma~\ref{lem:general-robust-bound-linear-truncation} and in particular
Corollary~\ref{cor:linear-trucnated-general-bound-cor-to-diagonal} in the
diagonal regime, we can show the following bound on the robust classification
error of the optimal $k$--truncated linear classifier $\mC^{(k)}_{\vw^*(k)}$.

\begin{lem}
	\label{lem:loss-k-a-norm-1-bound}
	Assume that the covariance matrix $\Sigma$ is diagonal. 
	Given $\bar{\Phi}(1) < \varepsilon < 1/2$, for $k = a \snorm{\vnu_{[1:\lambdace]}
	}_1$, we have
	\begin{equation*}
	\loss(\mC^{(k)}_{\vw^*(k)}, k) \leq \varepsilon + a \frac{8(1+\sqrt{2 \log d})}{\sqrt{2 \pi} \sqrt{1 - c(\varepsilon)^2}} + \frac{1}{\sqrt{2 \log d}}.
	\end{equation*}
\end{lem}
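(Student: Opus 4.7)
\medskip
\noindent\textbf{Proof plan.} Since $\vw^*(k)$ minimizes $\loss(\mCkw, k)$ over $\vw$, it suffices to exhibit a single weight vector $\vw$ for which Corollary~\ref{cor:linear-trucnated-general-bound-cor-to-diagonal} already produces the claimed bound. The plan is to choose $\vw$ so that the truncated classifier ignores the first $\lambdace - 1$ coordinates, which carry the largest $|\nu_i|$ and are therefore the most damaging to the $\snorm{\tvw}_\infty$ term, while keeping the tail whose collective $\ell_2$ energy is still large enough to ensure a useful signal-to-noise ratio.

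Concretely, I would parameterize via $\tvw = \Sigma^{1/2} \vw$ and set $\tw_i = \nu_i$ for $i \geq \lambdace$ and $\tw_i = 0$ otherwise. Under the sorting convention \eqref{eq:assumption-nu-sorted}, this choice immediately gives $\langle \tvw, \vnu \rangle = \snorm{\vnu_{[\lambdace:d]}}_2^2$, $\snorm{\tvw}_2 = \snorm{\vnu_{[\lambdace:d]}}_2$, and $\snorm{\tvw}_\infty = |\nu_{\lambdace}|$. Two quantitative estimates are then the heart of the argument. First, the minimality in the definition of $\lambdace$ forces $\snorm{\vnu_{[1:\lambdace-1]}}_2 < c(\varepsilon)$, which combined with $\snorm{\vnu}_2 = 1$ yields $\snorm{\vnu_{[\lambdace:d]}}_2 \geq \sqrt{1 - c(\varepsilon)^2}$. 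Second, because $|\nu_{\lambdace}|$ is the smallest among $|\nu_1|, \dots, |\nu_{\lambdace}|$, we have $\lambdace\,|\nu_{\lambdace}| \leq \snorm{\vnu_{[1:\lambdace]}}_1$, and Cauchy--Schwarz together with $\snorm{\vnu}_2 = 1$ gives $\snorm{\vnu_{[1:\lambdace]}}_1^2 \leq \lambdace$; together with the assumption $k = a \snorm{\vnu_{[1:\lambdace]}}_1$, these chain together to the clean upper bound $k\,\snorm{\tvw}_\infty \leq a$.

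Substituting these estimates into Corollary~\ref{cor:linear-trucnated-general-bound-cor-to-diagonal} and invoking the monotonicity of $x \mapsto x - \alpha/x$ on $(0,\infty)$ (for fixed $\alpha \geq 0$) in conjunction with the first bound shows that the argument of $\bar{\Phi}$ is at least $\sqrt{1 - c(\varepsilon)^2} - 8a(1 + \sqrt{2 \log d})/\sqrt{1 - c(\varepsilon)^2}$. Finally, the Lipschitz estimate $|\bar{\Phi}(x) - \bar{\Phi}(y)| \leq |x - y|/\sqrt{2\pi}$ (since the Gaussian density is bounded by $1/\sqrt{2\pi}$) together with the defining identity $\bar{\Phi}(\sqrt{1 - c(\varepsilon)^2}) = \varepsilon$ converts the $\bar{\Phi}$-term into $\varepsilon + 8a(1 + \sqrt{2 \log d})/(\sqrt{2\pi}\sqrt{1 - c(\varepsilon)^2})$, yielding the target bound. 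The one nontrivial step is identifying the right ansatz for $\tvw$: it must suppress the top-energy coordinates in order to control $k\,\snorm{\tvw}_\infty$, yet retain enough $\ell_2$-mass to keep $\langle \tvw, \vnu \rangle/\snorm{\tvw}_2$ close to $\sqrt{1 - c(\varepsilon)^2}$; the Cauchy--Schwarz/averaging argument above is precisely what guarantees that these two requirements are simultaneously achievable.
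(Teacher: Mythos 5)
Your proposal is correct and follows essentially the same route as the paper's proof: the same ansatz $\tw_i = \nu_i \cdot \one{i \geq \lambdace}$, the same lower bound $\snorm{\vnu_{[\lambdace:d]}}_2 \geq \sqrt{1-c(\varepsilon)^2}$ from the minimality of $\lambdace$, the same control of $k\snorm{\tvw}_\infty$ (the paper gets $\snorm{\vnu_{[1:\lambdace]}}_1\,|\nu_{\lambdace}| \leq \sum_{i \leq \lambdace}|\nu_i|^2 \leq 1$ by a direct term-by-term comparison rather than your Cauchy--Schwarz/averaging chain, but the two are equivalent), and the same concluding Lipschitz step for $\bar{\Phi}$. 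The only cosmetic difference is that you combine the two estimates via monotonicity of $x \mapsto x - \alpha/x$ while the paper bounds the two fractions in the argument of $\bar{\Phi}$ separately.
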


Furthermore, we can show the following lower bound on $\kopt$ which involves
the class of all classifiers. 

\begin{lem}
	\label{lem:kopt-bound}
	For $\bar{\Phi}(1) + \frac{1}{\log d} < \varepsilon < \frac{1}{2}$, we have
	\begin{equation*}
	\kopt\left(\varepsilon - \frac{1}{ \log d} \right) \leq \snorm{\vnu_{[1:\lambda_{c(\varepsilon)}]}}_1 \log d.
	\end{equation*}
\end{lem}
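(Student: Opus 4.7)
The plan is to apply Theorem~\ref{thm:lower-bound-diag} with a carefully chosen set $A$, namely $A = [1:\lambda_{c(\varepsilon)}]$, i.e.\ the indices of the top signal-to-noise coordinates, where the coordinates are already sorted so that $|\nu_1| \geq \cdots \geq |\nu_d|$. The intuition is that by taking $A$ to be the smallest prefix whose contribution to $\snorm{\vnu}_2^2$ already reaches the threshold $c(\varepsilon)^2$, the complement $A^c$ carries at most the remaining $1 - c(\varepsilon)^2$ of the total squared signal, which by design is exactly the amount that tunes the output of $\bar{\Phi}$ in Theorem~\ref{thm:lower-bound-diag} to the target error level $\varepsilon$.

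First, I would record that, by the definition of $\lambda_{c(\varepsilon)}$ in~\eqref{eq:lambda-c-def}, one has $\snorm{\vnu_A}_2 \geq c(\varepsilon)$. Combined with $\snorm{\vnu}_2 = 1$ (our normalization) and the fact that $\Sigma$ is diagonal (so $\vnu_A$ and $\vnu_{A^c}$ are orthogonal in the Pythagorean sense), this yields
\begin{equation*}
\snorm{\vnu_{A^c}}_2^2 = 1 - \snorm{\vnu_A}_2^2 \leq 1 - c(\varepsilon)^2.
\end{equation*}
Next I would invoke the monotonicity of $\bar{\Phi}$ together with the defining relation $\bar{\Phi}(\sqrt{1 - c(\varepsilon)^2}) = \varepsilon$ to conclude $\bar{\Phi}(\snorm{\vnu_{A^c}}_2) \geq \varepsilon$. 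Plugging this into Theorem~\ref{thm:lower-bound-diag} applied to this $A$ gives
\begin{equation*}
\optloss\bigl(\snorm{\vnu_A}_1 \log d\bigr) \geq \bar{\Phi}(\snorm{\vnu_{A^c}}_2) - \frac{1}{\log d} \geq \varepsilon - \frac{1}{\log d}.
\end{equation*}

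Finally, I would close the argument by using the monotonicity of $\optloss$ in its budget argument: allowing a larger adversarial budget cannot decrease the optimal robust error. Hence $\optloss(k) \geq \varepsilon - 1/\log d$ for every $k \geq \snorm{\vnu_A}_1 \log d$, which by the definition $\kopt(\varepsilon') = \max\{k : \optloss(k) \leq \varepsilon'\}$ immediately gives the desired inequality. There is essentially no hard step here once the correct set $A$ is identified; the only real decision is the choice of $A$, and the hypothesis $\varepsilon > \bar{\Phi}(1) + 1/\log d$ is used solely to guarantee that $c(\varepsilon)$ and hence $\lambda_{c(\varepsilon)}$ are well-defined and that the concluding error level $\varepsilon - 1/\log d$ still lies in the regime where the lower bound is informative.
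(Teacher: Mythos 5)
Your proposal is correct and follows essentially the same route as the paper's proof: apply Theorem~\ref{thm:lower-bound-diag} with $A = [1:\lambda_{c(\varepsilon)}]$, use $\snorm{\vnu_{A^c}}_2^2 = 1 - \snorm{\vnu_A}_2^2 \leq 1 - c(\varepsilon)^2$ together with the defining relation $\bar{\Phi}(\sqrt{1-c(\varepsilon)^2}) = \varepsilon$, and conclude via the monotonicity of $\optloss$ in the budget. The only cosmetic difference is that you make the choice of $A$ and the role of the hypothesis on $\varepsilon$ explicit, which the paper leaves implicit.
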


We are finally ready to prove Theorem~\ref{thm:alpha-trunc-alpha-opt}.

\begin{proof}[Proof of Theorem~\ref{thm:alpha-trunc-alpha-opt}]
	Using Lemma~\ref{lem:loss-k-a-norm-1-bound} for $\bar{\Phi}(1) \leq
	\varepsilon < 1/2$ and $k = a
	\snorm{\vnu_{[1:\lambdace]}}$ with
	\begin{equation*}
	a = \sqrt{1 - c(\varepsilon)^2} \frac{1}{16 \log d},
	\end{equation*}
	we get
	\begin{equation*}
	\loss(\mC^{(k)}_{\vw^*(k)}, k) \leq \varepsilon + \frac{1 + \sqrt{2 \log d}}{2 \sqrt{2 \pi} \log d} + \frac{1}{\sqrt{2 \log d}} \leq \varepsilon + \sqrt{\frac{2}{\log d}}.
	\end{equation*}
	This means that
	\begin{equation}
	\label{eq:ktrunc-lower-bound-1}
	\ktrunc\left( \varepsilon + \sqrt{\frac{2}{\log d}} \right) \geq \frac{\sqrt{1 - c(\varepsilon)^2}}{16} \frac{\snorm{\vnu_{[1:\lambdace]}}_1}{\log d} \qquad \text{ for } \quad \bar{\Phi}(1) < \varepsilon < \frac{1}{2} - \sqrt{\frac{2}{\log d}}.
	\end{equation}
	On the other hand, from Lemma~\ref{lem:kopt-bound} we know that
	\begin{equation*}
	\kopt\left(\varepsilon - \frac{1}{ \log d} \right) \leq \snorm{\vnu_{[1:\lambda_{c(\varepsilon)}]}}_1 \log d \qquad \text{ for } \quad \bar{\Phi}(1) + \frac{1}{\log d} < \varepsilon < \frac{1}{2}.
	\end{equation*}
	Comparing this with~\eqref{eq:optloss-vnu-lambda-c-bound-1}, we realize that
	\begin{equation*}
	\ktrunc\left( \varepsilon + \sqrt{\frac{2}{\log d}} \right) \geq \frac{\sqrt{1 - c(\varepsilon)^2}}{16 \log^2 d} \kopt\left( \varepsilon - \frac{1}{\log d} \right) \qquad \text{ for } \quad \bar{\Phi}(1) + \frac{1}{\log d} < \varepsilon < \frac{1}{2} - \sqrt{\frac{2}{\log d}}.
	\end{equation*}
	Equivalently, taking $\log_d$ from both sides, we realize that for
	$\bar{\Phi}(1) + 1/ \log d < \varepsilon < 1/2 - \sqrt{2 / \log d}$, 
	\begin{align*}
	\alphatrunc\left( \varepsilon + \sqrt{\frac{2}{\log d}} \right) \geq \alphaopt\left( \varepsilon - \frac{1}{\log d} \right) - \frac{2 \log \log d}{\log d} + \frac{\log(\sqrt{1 - c(\varepsilon)^2} / 16)}{\log d}.
	\end{align*}
	By shifting $\varepsilon$,
	we
	realize that for $\bar{\Phi}(1) + 1/ \log d + \sqrt{2 / \log d} < \varepsilon
	< 1/2$, we have
	\begin{equation}
	\label{eq:alphatrunc-alphaopt-c1-c2-proof}
	\alphatrunc(\varepsilon) \geq \alphaopt(\varepsilon - c_1(\varepsilon, d)) - c_2(\varepsilon, d),
	\end{equation}
	where
	\begin{equation}
	\label{eq:c1-e-d-def}
	c_1(\varepsilon, d) := \frac{1}{\log d} + \sqrt{\frac{2}{\log d}},
	\end{equation}
	and
	\begin{equation}
	\label{eq:c2-e-d-def}
	c_2(\varepsilon, d) := \frac{2 \log \log d}{\log d} - \frac{\log \left( \frac{\sqrt{1-c(\varepsilon - \sqrt{2 / \log d})^2}}{16} \right)}{\log d}.
	\end{equation}
	Observe that for $i \in \{1,2\}$, $c_i(\varepsilon, d)$ does not depend on the
	parameters of the problem and $\lim_{d \rightarrow \infty} c_i(\varepsilon, d)
	= 0$. On the other hand, since  $\alphaopt(\varepsilon)$ is obtained by
	optimizing over all classifiers while $\alphatrunc(\varepsilon)$ is obtained
	by optimizing over the class of linear truncated classifiers, we always have
	$\alphaopt(\varepsilon) \geq \alphatrunc(\varepsilon)$. This completes the proof.
\end{proof}

Finally, we give the proofs for
Lemmas~\ref{lem:general-robust-bound-linear-truncation},~\ref{lem:loss-k-a-norm-1-bound},
and \ref{lem:kopt-bound}.

\begin{proof}[Proof of Lemma~\ref{lem:general-robust-bound-linear-truncation}]
	We may write
	\begin{equation}
	\label{eq:robust-error-general-w-pr-exists-vxp}
	\begin{aligned}
	\loss_{\vmu, \Sigma}(\mCkw, k) &= \evwrt{(\vx, y) \sim \mD}{\max_{\vxp \in \mB_0(\vx, k)} \ell(\mCkw; \vxp, y)} \\
	&= \pr{\exists \vxp \in \mB_0(\vx, k): \mCkw(\vxp \neq y)} \\
	&= \pr{\exists \vxp \in \mB_0(\vx, k): \sgn(\langle \vw, \vxp \rangle_k) \neq y} \\
	&\stackrel{(*)}{=} \pr{\exists \vxp \in \mB_0(\vx, k): \sgn(\langle \vw, \vxp \rangle_k) \neq 1 | y = 1} \\
	&= \pr{\exists \vxp \in \mB_0(\vx, k): \langle \vw, \vxp \rangle_k \leq 0 | y = 1}
	\end{aligned}
	\end{equation}
	where $(*)$ uses the symmetry in distribution $\mD$. Using Lemma~\ref{lem:trun-ip-bound}, for all $\vxp \in \mB_0(\vx, k)$, we have
	\begin{equation*}
	\langle \vw, \vxp \rangle_k \geq \langle \vw, x \rangle - 8k \snorm{\vw \odot \vx}_\infty.
	\end{equation*}
	Using this in~\eqref{eq:robust-error-general-w-pr-exists-vxp}, we get
	\begin{equation}
	\label{eq:robust-loss-simplified-norm-inf}
	\loss_{\vmu, \Sigma}(\mCkw, k) = \pr{\langle  \vw, \vx \rangle \leq 8 k \snorm{\vw \odot \vx}_\infty | y = 1}.
	\end{equation}
	Note that conditioned on $y = 1$, we have $\vx = \vmu + \vz$ where $\vz \sim
	\mN(0,\Sigma)$. Let $\tSigma$ be the diagonal matrix consisting of the diagonal
	entries in $\Sigma$. Since $\tSigma$ is diagonal, we may write
	\begin{equation}
	\label{eq:w-odot-x-bound}
	\begin{aligned}
	\snorm{\vw \odot \vx}_\infty &= \snorm{\vw \odot \vmu + \vw \odot \vz}_\infty \\
	&\leq \snorm{\vw \odot \vmu}_\infty + \snorm{\vw \odot \vz}_\infty \\
	&\leq \snorm{(\tSigma^{1/2}\vw) \odot (\tSigma^{-1/2}\vmu)}_\infty + \snorm{(\tSigma^{1/2}\vw) \odot (\tSigma^{-1/2}\vz)}_\infty \\
	&\leq \snorm{\tSigma^{1/2} \vw}_\infty\left( \snorm{\tSigma^{-1/2} \vmu}_\infty + \snorm{\tSigma^{-1/2}\vz}_\infty \right).
	\end{aligned}
	\end{equation}
	We now bound the infinity norm of the vector  $\va := \tSigma^{-1/2}
	\vmu$. With $\sigma_1^2, \dots, \sigma_d^2$ denoting the diagonal entries in
	$\Sigma$, we have $a_i = \mu_i / \sigma_i$. Note that $\bar{\Phi}(|\mu_i| /
	\sigma_i)$  is the optimal Bayes classification error of $y$ given  $x_i$
	only, which cannot be smaller than the optimal Bayes classifier of $y$ given
	the whole vector $\vx$, which is in turn equal to
	$\bar{\Phi}(\snorm{\Sigma^{-1/2} \vmu}_2) = \bar{\Phi}(1)$. This means that
	$|a_i| = |\mu_i| / \sigma_i \leq 1$, and in particular
	\begin{equation}
	\label{eq:tsigma-1/2-vmu-infty-less-1}
	\snorm{\va}_\infty = \snorm{\tSigma^{-1/2} \vmu}_\infty \leq 1.
	\end{equation}
	Next, we bound the infinity norm of the random vector $\vb :=
	\tSigma^{-1/2}\vz$. Note that $b_i \sim \mN(0,1)$. Therefore, using the union
	bound, we may write
	\begin{equation}
	\label{eq:norm-inf-tsigma-1/2-z-sqrt-2-logd}
	\begin{aligned}
	\pr{\snorm{\tSigma^{-1/2}\vz}_\infty \geq \sqrt{2 \log d}} &\leq d \bar{\Phi}(\sqrt{2 \log d}) \\
	&\leq d \frac{1}{\sqrt{2 \pi} \sqrt{2 \log d}} e^{- \log d} \\
	&\leq \frac{1}{\sqrt{2 \log d}}.
	\end{aligned}
	\end{equation}
	Using this together with~\eqref{eq:tsigma-1/2-vmu-infty-less-1} back
	into~\eqref{eq:w-odot-x-bound}, we realize that
	\begin{equation*}
	\pr{\snorm{\vw \odot \vx}_\infty \leq \snorm{\tSigma^{1/2} \vw}_\infty (1 + \sqrt{2 \log d})} \geq 1 - \frac{1}{\sqrt{2 \log d}}.
	\end{equation*}
	This together with~\eqref{eq:robust-loss-simplified-norm-inf} implies that
	\begin{equation}
	\label{eq:loss-general-w-bound-semi-final}
	\loss_{\vmu, \Sigma}(\mCkw, k) \leq \frac{1}{\sqrt{2 \log d}} + \pr{\langle \vw , \vx \rangle \leq 8k \snorm{\tSigma^{1/2} \vw}_\infty ( 1 + \sqrt{2 \log d})\, \Big |\, y = 1}.
	\end{equation}
	Again, using the fact that $\vx = \vmu + \vz$ conditioned on $y = 1$, we have
	\begin{align*}
	&\pr{\langle \vw , \vx \rangle \leq 8k \snorm{\tSigma^{1/2} \vw}_\infty ( 1 + \sqrt{2 \log d})\, \Big |\, y = 1} \\
	&\qquad= \pr{\langle \vw, \vz \rangle \leq 8k \snorm{\tSigma^{1/2} \vw}_\infty ( 1 + \sqrt{2 \log d}) - \langle {\vw, \vmu} \rangle} \\
	&\qquad= \pr{\frac{\langle \vw, \vz \rangle}{\snorm{\Sigma^{1/2} \vw}_2} \leq \frac{8k \snorm{\tSigma^{1/2} \vw}_\infty ( 1 + \sqrt{2 \log d}) - \langle {\vw, \vmu\rangle}}{\snorm{\Sigma^{1/2} \vw}_2} }  \\
	&\qquad=\bar{\Phi}\left( \frac{\langle \vw, \vmu\rangle  - 8k \snorm{\tSigma^{1/2} \vw}_\infty (1 + \sqrt{2 \log d})}{\snorm{\Sigma^{1/2} \vw}_2} \right).
	\end{align*}
	Substituting this into~\eqref{eq:loss-general-w-bound-semi-final} completes
	the proof of Lemma~\ref{lem:general-robust-bound-linear-truncation}.
\end{proof}

\begin{proof}[Proof of Lemma~\ref{lem:loss-k-a-norm-1-bound}]
	We define $\tvw \in \reals^d$  as follows
	\begin{equation*}
	\tw_i =
	\begin{cases}
	0 & i < \lambdace \\
	\nu_i & i \geq \lambdace
	\end{cases}
	\end{equation*}
	With this, let $\vw = \Sigma^{-1/2} \tvw$ and note that since $\vw^*(k)$ is
	obtained by optimizing for $\loss(\mC^{(k)}_{\vw}, k)$, we have
	\begin{equation}
	\label{eq:mckw-star-less-than-mckw-candidate}
	\loss(\mC^{(k)}_{\vw^*(k)}, k) \leq \loss(\mC^{(k)}_{\vw}, k), 
	\end{equation}
	with $\vw$ defined above. From
	Corollary~\ref{cor:linear-trucnated-general-bound-cor-to-diagonal}, we have
	\begin{equation}
	\label{eq:loss-mckw-candidate-bound-from-cor}
	\loss(\mC^{(k)}_{\vw}, k) \leq \frac{1}{\sqrt{2 \log d}} + \bar{\Phi}\left(  \frac{\langle \tvw, \vnu\rangle  - 8k \snorm{\tvw}_\infty (1 + \sqrt{2 \log d})}{\snorm{ \tvw}_2} \right).
	\end{equation}
	Note that
	\begin{equation}
	\label{eq:ip-tvw-tnu-lambdac-d-norm2}
	\langle  \tvw, \vnu \rangle = \sum_{i=\lambdace}^d \nu_i^2 = \snorm{\vnu_{[\lambdace:d]}}_2^2.
	\end{equation}
	Likewise,
	\begin{equation}
	\label{eq:norm-tvw-nu-2}
	\snorm{\tvw}_2 = \sqrt{\sum_{i=\lambdace}^d \nu_i^2} = \snorm{\vnu_{[\lambdace:d]}}_2.
	\end{equation}
	Recall that $\lambdace$ by definition is the smallest $\lambda$ such that
	$\snorm{\vnu_{[1:\lambda]}}_2 \geq c(\varepsilon)$. This implies that
	$\snorm{\vnu_{[1:\lambdace-1]}}_2 < c(\varepsilon)$ and
	\begin{equation}
	\label{eq:norm-cnu-lambdace-d-more-1-c2}
	\snorm{\vnu_{[\lambdace:d]}}_2^2 = \snorm{\vnu}_2^2 - \snorm{\vnu_{[1:\lambdace-1]}}_2^2 \geq 1 - c(\varepsilon)^2.
	\end{equation}
	Comparing this with~\eqref{eq:ip-tvw-tnu-lambdac-d-norm2} and
	\eqref{eq:norm-cnu-lambdace-d-more-1-c2}, we realize that
	\begin{equation}
	\label{eq:frac-ip-vw-nu-norm-tvw-2-sqrt-1-c2}
	\frac{\langle \tvw, \vnu \rangle}{\snorm{\tvw}_2} = \snorm{\vnu_{[\lambdace:d]}}_2 \geq \sqrt{1 - c(\varepsilon)^2}.
	\end{equation}
	On the other hand, since we have assumed in~\eqref{eq:assumption-nu-sorted}, we
	have $\snorm{\tvw}_\infty = |\vnu_{\lambdace}|$. Furthermore,
	using~\eqref{eq:assumption-nu-sorted}, we have
	\begin{equation*}
	\snorm{\vnu_{[1:\lambdace]}}_1 |\nu_{\lambdace}| = \sum_{i=1}^{\lambdace} |\nu_i| |\nu_{\lambdace}| \leq \sum_{i=1}^{\lambdace} |\nu_i|^2 \leq \snorm{\vnu}_2^2 = 1.
	\end{equation*}
	This together with~\eqref{eq:ip-tvw-tnu-lambdac-d-norm2}
	and~\eqref{eq:norm-tvw-nu-2} implies that 
	\begin{align*}
	\frac{8k \snorm{\tvw}_\infty (1 + \sqrt{2 \log d})}{\snorm{\vw}_2} &= \frac{8(1 + \sqrt{2 \log d}) a \snorm{\vnu_{[1:\lambdace]}}_1 |\nu_{\lambdace}|}{\snorm{\vnu_{[\lambdace:d]}}_2} \\
	&\leq \frac{8a (1 + \sqrt{2 \log d})}{\sqrt{1 - c(\varepsilon)^2}}.
	\end{align*}
	Using this and~\eqref{eq:frac-ip-vw-nu-norm-tvw-2-sqrt-1-c2} back
	into~\eqref{eq:loss-mckw-candidate-bound-from-cor} and using the fact that
	$\bar{\Phi}(.)$ is decreasing and $1/\sqrt{2 \pi}$-Lipschitz, we realize that
	\begin{align*}
	\loss(\mCkw, k) &\leq \frac{1}{\sqrt{2 \log d}} + \bar{\Phi}\left( \sqrt{1 - c(\varepsilon)^2} - \frac{8a (1 + \sqrt{2 \log d})}{\sqrt{1 - c(\varepsilon)^2}} \right) \\
	&\leq \bar{\Phi}(\sqrt{1 - c(\varepsilon)^2})  + a \frac{8 (1 + \sqrt{2 \log d})}{\sqrt{2 \pi}\sqrt{1 - c(\varepsilon)^2}} + \frac{1}{\sqrt{2 \log d}}\\
	&= \varepsilon  +a \frac{8 (1 + \sqrt{2 \log d})}{\sqrt{2 \pi}\sqrt{1 - c(\varepsilon)^2}} + \frac{1}{\sqrt{2 \log d}}.
	\end{align*}
	This together with~\eqref{eq:mckw-star-less-than-mckw-candidate} completes the proof.
\end{proof}

\begin{proof}[Proof of Lemma~\ref{lem:kopt-bound}]
	From Theorem~\ref{thm:lower-bound-diag}, we have
	\begin{equation}
	\label{eq:optloss-vnu-lambda-c-bound-1}
	\optloss(\snorm{\vnu_{[1:\lambda_{c(\varepsilon)}]}}_1 \log d) \geq \bar{\Phi}(\snorm{\vnu_{[1+\lambda_{c(\varepsilon)}: d]}}_2) - \frac{1}{\log d}.
	\end{equation}
	We have
	\begin{align*}
	\snorm{\vnu_{[1+\lambda_{c(\varepsilon)}: d]}}_2^2 &= \snorm{\vnu}_2^2 - \snorm{\vnu_{[1:\lambda_{c(\varepsilon)}]}}_2^2 \\
	&= 1 - \snorm{\vnu_{[1:\lambda_{c(\varepsilon)}]}}_2^2 \\
	&\leq 1 -c(\varepsilon)^2.
	\end{align*}
	where the last inequality uses the definition of $\lambda_{c(\varepsilon)}$
	in~\eqref{eq:lambda-c-def}. Using this back
	into~\eqref{eq:optloss-vnu-lambda-c-bound-1}, we get
	\begin{equation*}
	\optloss(\snorm{\vnu_{[1:\lambda_{c(\varepsilon)}]}}_1 \log d) \geq \bar{\Phi}(\sqrt{1 - c(\varepsilon)^2}) - \frac{1}{\log d} = \varepsilon - \frac{1}{\log d}.
	\end{equation*}
	Note $\optloss(k)$ is nondecreasing in $k$, therefore this implies that
	$\optloss(k) \geq \varepsilon - 1 / \log d$ for $k \geq
	\snorm{\vnu_{[1:\lambda_{c(\varepsilon)}]}}_1 \log d$ and completes the proof.
\end{proof}

\section{Implementation Details}\label{sec:app_implementation}
\subsection{Architecture and training details for MNIST}
For our experiments on MNIST, we utilized fully connected networks consisting of 5 hidden layers as shown below in Table~\ref{tab:FC_arch}. For the truncated version, we replaced the first FC layer with our matrix truncation operation defined in \eqref{eq:matrix_trunc}. The exact implementation and code required to replicate our results are given as part of the attached supplementary material.\par
\begin{table}[!h]
	\captionsetup{skip=\baselineskip}
	\centering
	\begin{tabular}{ll}
		\toprule
		\multicolumn{1}{c}{Layer} &
		\multicolumn{1}{c}{Output Shape}\\
		\midrule
		Input & $784 (28\times28)$\\
		Fully Connected + ReLU & $1568$\\
		Fully Connected + ReLU & $3136$\\
		Fully Connected + ReLU & $500$\\
		Fully Connected + ReLU & $100$\\
		Fully Connected & $10$\\
		\bottomrule
	\end{tabular}
	\caption{Fully Connected Network Architecture}
	\label{tab:FC_arch}
	\caption*{Architecture of $F^{(0)}$, where for $F^{(k)}$ the first FC Layer gets replaced with the matrix truncation operator defined in \eqref{eq:matrix_trunc}.} 
\end{table}
For training $F^{(10)}$ and $F^{(0)}$, we utilized stochastic gradient descent and reset the training set with adversarial examples every $25$ epochs using the \texttt{sparse-rs} attack with an $\ell_0$--budget of $10$ and a time budget of $300$ queries. The rest of the details for the learning component are provided in Table~\ref{tab:FC_params} below. As previously mentioned, the full implementation along with a general adversarial training class is provided as part of our code. 
\begin{table}[!h]
	\captionsetup{skip=\baselineskip}
	\centering
	\begin{tabular}{ll}
		\toprule
		\multicolumn{1}{c}{Parameter} &
		\multicolumn{1}{c}{Description}\\
		\midrule
		Batch Size & $256$ \\
		Optimizer & Stochastic Gradient Descent\\
		Training Epochs & $250$\\
		Learning Rate & $0.001$\\
		Momentum & $0.9$\\
		\bottomrule
	\end{tabular}
	\caption{Training details for MNIST}
	\label{tab:FC_params}
	\caption*{Details of the stochastic gradient descent implementation used for training $F^{(10)}$ and $F^{(0)}$.}
\end{table}
\subsection{Architecture and training details for CIFAR}
As done by previous works referred to in the main text, we used the CIFAR-10 dataset specifically when performing our experiments. For our network structure we chose the VGG-19 \cite{vgg} architecture, implementing it without dropout layers. We do not show the architecture here due to its size, but the full implementation is provided in our code. For the truncated version $\textrm{VGG}^{(10)}$, we applied truncation as defined in \eqref{eq: truncation} before the first convolution layer. As with FC networks, when training $\textrm{VGG}^{(10)}$ and $\textrm{VGG}^{(0)}$ we utilized stochastic gradient descent, resetting the training set with adversarial examples every $25$ epochs using the \texttt{sparse-rs} attack with an $\ell_0$--budget of $10$ and a time budget of $300$ queries. The rest of the training details are provided in Table~\ref{tab:Conv_params} below.
\begin{table}[!h]
	\captionsetup{skip=\baselineskip}
	\caption{Training details for CIFAR}
	\label{tab:Conv_params}
	\centering
	\begin{tabular}{ll}
		\toprule
		\multicolumn{1}{c}{Parameter} &
		\multicolumn{1}{c}{Description}\\
		\midrule
		Batch Size & $128$ \\
		Optimizer & Stochastic Gradient Descent\\
		Training Epochs & $250$\\
		Learning Rate & $0.2,0.1,0.05,0.025,0.01,0.005,0.0025,0.001,0.0005,0.00025$\\
		Momentum & $0.9$\\
		Weight Decay & $0.0005$\\
		\bottomrule
	\end{tabular}
	\caption*{Details of the stochastic gradient descent implementation used for training $\textrm{VGG}^{(10)}$ and $\textrm{VGG}^{(0)}$. Note that the learning rate was updated in descending order every $25$ epochs according to the list provided.}
\end{table}
\subsection{Efficiency of implementation}
As noted, the exact implementations of truncation both within FC and $\textrm{VGG}$ networks is provided as part of our submitted code. Here we would like to point out some details regarding the efficiency of our implementations. For FC networks, we built a custom \texttt{pytorch} module to implement \eqref{eq:matrix_trunc} as a linear layer that performs truncation at every vector dot product before returning the output. This meant we could no longer rely on \texttt{pytorch}'s computationally efficient batch matrix multiplication operation that is written in \texttt{c++}, and instead broadcasted our operation to work on batches using \texttt{python}. Utilizing the efficient FC layers to train $F^{(0)}$, each training epoch took roughly $0.5$ seconds on an RTX-3080 GPU, while for the truncated network $F^{(10)}$ each training epoch took $29-30$ seconds. Decreasing the truncation parameter to $k=1$ we see the truncated network $F^{(1)}$ takes $26-27$ seconds per epoch, showing that truncation is not the sole reason behind the slow down, and this can be mitigated by implementing our custom layer in \texttt{c++} as done for the regular FC layer. 

We actually see this fact come to play for our implementation of the $\textrm{VGG}$ networks, as we did not utilize a custom \texttt{pytorch} module, and instead implemented truncation as a separate function. The regular $\textrm{VGG}^{(0)}$ network took $13-14$ seconds per epoch while the truncated $\textrm{VGG}^{(10)}$ network took $32-33$ seconds. This shows that truncation can be implemented efficiently, and does not significantly increase the computational overhead for deep neural networks. 
\subsection{Computational resources}
The majority of our work was performed on an internal cluster containing the 20C/40T Intel Xeon Silver 4114 CPU, 64GB RAM, and $2\times$GTX-1080 GPUs. All adversarial training was performed on the GPUs, where fully training $F{(10)}$ and $\textrm{VGG}^{(10)}$ as done in Table~\ref{tab:rob_acc} took roughly 5 days each. For analyzing the attacks and running shorter experiments, a  personal computer with the 8C/16T Intel-9900K CPU, 32GB RAM, and an RTX-3080 GPU was used. The efficiency of our implementations was compared using the personal computer, as was described in the previous section. \par
\end{document}